\def\eqref#1{equation~\ref{#1}}
\def\1{\bm{1}}
\DeclareMathAlphabet{\mathsfit}{\encodingdefault}{\sfdefault}{m}{sl}
\SetMathAlphabet{\mathsfit}{bold}{\encodingdefault}{\sfdefault}{bx}{n}
\newcommand{\E}{\mathbb{E}}
\DeclareMathOperator{\sign}{sign}
\renewcommand*{\backref}[1]{}
\renewcommand*{\backrefalt}[4]{%
    \ifcase #1 %
    \or
        (cited on page~#2)%
    \else
        (cited on pages~#2)%
    \fi
}
\newtheorem{theorem}{Theorem}
\newtheorem{lemma}{Lemma}
\newtheorem{remark}{Remark}
\newtheorem{assumption}{Assumption}
\newcommand{\xmark}{\ding{55}} 
\title{Ano : Faster is Better in Noisy Landscapes \\}
\author{Adrien Kegreisz \\
Independent Researcher\\
Paris, France\\
\texttt{adrien.kegreisz@gmail.com}
}
\begin{document}

\maketitle
\begin{abstract}
Stochastic optimizers are central to deep learning, yet widely used methods such as Adam and Adan can degrade in non-stationary or noisy environments, partly due to their reliance on momentum-based magnitude estimates. We introduce Ano, a novel optimizer that decouples direction and magnitude: momentum is used for directional smoothing, while instantaneous gradient magnitudes determine step size. This design improves robustness to gradient noise while retaining the simplicity and efficiency of first-order methods. We further propose Anolog, which removes sensitivity to the momentum coefficient by expanding its window over time via a logarithmic schedule. We establish non-convex convergence guarantees with a convergence rate similar to other sign-based methods, and empirically show that Ano provides substantial gains in noisy and non-stationary regimes such as reinforcement learning, while remaining competitive on low-noise tasks.
\end{abstract}

\section{Introduction}
Stochastic optimization is central to modern deep learning. Adaptive methods such as Adam~\citep{kingma2015adam}, and their variants~\citep{reddi2018convergence,loshchilov2018decoupled,Zaheer2018Adaptive} are widely used because they automatically adjust step sizes and often accelerate early training. However, their behavior can degrade under noisy or non-stationary conditions: mini-batch stochasticity and data augmentation induce gradient noise~\citep{mandt2017stochastic}, labels may be ambiguous or noisy~\citep{zhang2017understanding,song2022learning}, and in reinforcement learning the training targets evolve over time~\citep{henderson2018matters,mnih2016asynchronous}. A key limitation is that Adam couples update direction and magnitude through momentum: prior work~\citep{balles2018dissecting} shows that the momentum sign already captures most directional information, while its magnitude and the second-moment estimate impose heavy smoothing.

We propose \textbf{A New Optimizer}, abbreviated as \textbf{Ano}, designed to handle noisy optimization landscapes. First, we decouple direction and magnitude: updates follow the momentum sign for stability, while the step size is scaled by an SNR-like ratio of instantaneous gradients, avoiding the sluggishness of momentum-based magnitudes. Second, we revisit Yogi’s asymmetric variance update \citep{Zaheer2018Adaptive}, which accelerates recovery after noise spikes, and introduce an additional decay factor to control its memory. This preserves Yogi’s fast responsiveness while ensuring smoother adaptation under highly stochastic gradients.

We summarize our main contributions as follows:
\begin{itemize}
\item We propose a new gradient-scaling mechanism that removes the reliance on momentum-based magnitude estimates, leading to better adaptation in non-stationary and/or noisy optimization landscapes with the same memory cost as Adam.
\item We provide a theoretical analysis of Ano in the non-convex setting, establishing a convergence rate of $\mathcal{O}\!\left(k^{-1/4}\right)$ under standard assumptions, matching existing results for sign-based optimizers.
\item We evaluate Ano on supervised and reinforcement learning tasks, showing clear gains in noisy, non-stationary settings while remaining competitive in standard benchmarks.
\end{itemize}

\section{Related Work}
Research on stochastic optimizers spans several directions. We briefly review the lines most relevant to Ano and situate our contribution.

\paragraph{Adaptive methods.}
AdaGrad~\citep{duchi2011adaptive}, AdaDelta~\citep{zeiler2012adadeltaadaptivelearningrate} and RMSProp~\citep{tieleman2012rmsprop} pioneered coordinate-wise adaptivity; Adam~\citep{kingma2015adam} combined first- and second-moment estimates and became a default in deep learning. Yogi~\citep{Zaheer2018Adaptive} stabilizes the second-moment accumulator for non-stationary regimes. More recently, Adan~\citep{xie2024adan} couples adaptive moments with Nesterov-style lookahead and has emerged as a competitive baseline. Our optimizer Ano relates to this family through variance-aware step-size control, but differs in how direction and magnitude are constructed.
\paragraph{Sign-based methods.}
SignSGD and Signum~\citep{bernstein2018signsgd} reduce updates to element-wise signs, offering scale invariance and communication efficiency in distributed settings. Lion~\citep{chen2023symbolic} revisits sign-based updates with a tailored momentum schedule, yielding strong empirical results. Ano keeps the robustness of sign-informed directions but reintroduces gradient magnitudes through an explicit decoupling, trading pure scale invariance for finer adaptivity.
\paragraph{Direction–magnitude decoupling.}
Recent works such as Grams~\citep{cao2024grams} decouple the update by using gradient signs for direction and the momentum norm for scaling. Ano adopts a complementary design: momentum provides a stable directional signal, while the raw gradient norm sets the step size. This hybridization aims to combine the resilience of sign-based directions with the adaptivity of moment estimators.
\paragraph{Optimization under non-stationarity.}
Non-stationarity is a known stressor for optimization, particularly in RL. Prior work tackled it with task-specific procedures such as \emph{Normalize and Project (NaP)}~\citep{lyle2024normalization} and meta-learned optimizers for RL~\citep{lan2025learning}. While these highlight the need for robustness to evolving objectives, they are not first-order per-parameter adaptive optimizers in the Adam/Lion sense. Ano instead offers a simple, general optimizer that retains such efficiency while improving stability under noise and non-stationarity.
\paragraph{Discussion.}
Ano unifies sign-based and adaptive-moment ideas via a per-parameter direction–magnitude split (momentum for direction, raw gradients for scale), which we find particularly robust in high-variance regimes while remaining competitive on standard tasks.

\section{Algorithm}
\label{sec:algorithm}
The full Ano algorithm is summarized in Algorithm \ref{alg:ano}. Like Adam, it maintains first- and second-moment estimates $m_k$, $v_k$, but introduces two key innovations described below: one targeting the decoupling of update direction and magnitude, and the other improving variance adaptation under noisy gradients.

Ano algorithm is presented below:

\begin{algorithm}[H]
\caption{Ano Algorithm}
\Input{Initial parameters $x_1 \in \mathbb{R}^d$, learning rate $\eta_k$, decay rates $\beta_1, \beta_2 \in [0,1)$, $\epsilon > 0$}
\BlankLine
Initialize $m_0 = 0$, $v_0 = 0$ \\
\For{$k = 1$ \KwTo $K$}{
Compute gradient $g_k = \nabla \ell(x_k)$ \\
$m_k = \beta_1 m_{k-1} + (1 - \beta_1) g_k$ \\
$v_k = \beta_2v_{k-1} - (1 - \beta_2) \cdot \text{sign}(v_{k-1}-g_k^2) \cdot g_k^2$ \\
$\hat{v_k} = \frac{v_k}{1-\beta2^k}$\\
$x_{k+1} = x_k - \frac{\eta_k}{\sqrt{\hat{v_k}} + \epsilon} \cdot |g_k| \cdot \text{sign}(m_k) - \eta_k \lambda x_k$\\
}
\label{alg:ano}
\end{algorithm}

\paragraph{Sign–Magnitude Decoupling.}
We explicitly decouple the direction and magnitude of parameter updates to mitigate the conservative dynamics of Adam. In Adam, both signals are derived from the momentum term $m_k$, so when large noise spikes occur, their opposing effects can partially cancel out, reducing the effective momentum and thereby slowing down the updates. Ano keeps the direction $\operatorname{sign}(m_k)$ for robustness to noise but replaces the momentum magnitude with the instantaneous gradient norm $|g_k|$ for a better scaling. 

Concretely, recall that Adam updates parameters via
\[
\begin{aligned}
x_{k+1} &= x_k - \frac{\eta_k}{\sqrt{v_k} + \epsilon} \cdot m_k = x_k - \frac{\eta_k}{\sqrt{\hat{v_k}} + \epsilon}\, 
            \underset{\text{magnitude}}{\underbrace{\bigl|m_k|}}\, 
            \cdot
            \underset{\text{direction}}{\underbrace{\operatorname{sign}(m_k)}}.
\end{aligned}
\]

Our optimiser \textbf{Ano} performs the same directional move but replaces the momentum magnitude with \( |g_k| \):
\[
x_{k+1} = x_k 
- \frac{\eta_k}{\sqrt{\hat{v_k}} + \epsilon}\,
  \underset{\text{magnitude}}{\underbrace{|g_k|}}
  \cdot
  \underset{\text{direction}}{\underbrace{\operatorname{sign}\!\bigl(m_k)}}.
\]

\paragraph{Second-Moment Term.}
Ano improves variance dynamics for stability and fast recovery in particular for non-stationnary landscape. Adam’s exponential moving average~\citep{kingma2015adam} keeps noise spikes alive for many iterations, inflating the variance estimate and shrinking steps even after the signal improves. Yogi~\citep{Zaheer2018Adaptive} addresses this with asymmetric updates for faster decay. We extend Yogi by introducing a decay factor that explicitly controls variance memory, maintaining the exponential structure while allowing smooth forgetting of outdated information. This mechanism naturally assigns greater weight to recent gradients, thereby enhancing adaptation in dynamic environments. which is essential in non-stationary environments. Formally,
\[
v_k = \beta_2 v_{k-1} - (1 - \beta_2)\, \mathrm{sign}(v_{k-1} - g_k^2)\, g_k^2,
\]
turning the variance term into a memory-controlled statistic rather than a purely reactive estimate.

\paragraph{Bias Correction and Weight Decay.}
Since Ano relies solely on the momentum direction for updates, bias correction of its magnitude is unnecessary and omitted for simplicity (same as Lion) but keep it for the variance estimate. Weight decay follows AdamW~\citep{loshchilov2018decoupled} for decoupled regularization.

\paragraph{Hyperparameters.}
Like Adam, Ano maintains first and second moments estimators : $m_k$ and $v_k$, each regulated by decay rates $\beta_1$ and $\beta_2$, with $\beta_1 \in [0, 1)$ and $\beta_2 \in [\frac{1}{2}, 1)$. We set $\beta_1 = 0.92$ and $\beta_2 = 0.99$ for stable convergence. Additionally, a weight decay coefficient $\lambda \in [0, +\infty)$ is employed to mitigate overfitting.

\section{Extension}
Inspired by our convergence analysis, we extend Ano to include a time-dependent momentum parameter $\beta_1$, resulting in a variant we denote \textbf{Anolog} (Ano with logarithmic scheduling). While Ano consistently yields the best raw performance, Anolog provides a practical advantage by removing the need to tune $\beta_1$. This reduction in hyperparameter sensitivity makes Anolog a competitive choice in scenarios with limited tuning budgets, despite its slightly lower peak performance.

We define $\beta_{1,k} = 1 - \frac{1}{\log(k+2)}$, motivated by both theoretical considerations and empirical evidence favoring slow, progressive adjustments to optimization hyperparameters. A gradually increasing $\beta_1$ enlarges the effective averaging window of the momentum, thereby reducing the impact of stochastic gradient noise as training proceeds. In contrast, more aggressive schedules (e.g., square-root) may render the momentum insufficiently responsive to recent gradient information, particularly in non-stationary settings where rapid adaptation is crucial. Section~\ref{sec:ablation} provides empirical and ablation results comparing this logarithmic schedule against square-root ($\beta_{1,k} = 1 - \frac{1}{\sqrt{k}}$) and harmonic ($\beta_{1,k} = 1 - \frac{1}{k}$) schedules.

 Full Anolog pseudo code can be found in Appendix \ref{app:anolog-pseudocode} - Algorithm \ref{alg:anolog}.
 
\section{Analysis}

\subsection{Theoretical Analysis}
\label{sec:theory-analysis}

We provide non-asymptotic convergence guarantees for \textbf{Ano} under standard assumptions commonly used in adaptive stochastic optimisation~\citep{kingma2015adam,reddi2018convergence}. Consider the stochastic optimisation problem:
\(
  \min_{x \in \mathbb{R}^{d}} f(x),
\)
where \(f\) is differentiable, \(L\)-smooth, and bounded below. Let \(g_{k,i}\) denote the \(i\)-th coordinate of the stochastic gradient at iteration \(k\) and \( \mathcal{F}_{k-1}\) the filtration k-1. We assume that the gradient is bounded, \(|\nabla_i f(x_k)| \le G, \forall x \in \mathbb{R}^d\), the stochastic gradient is unbiased \((\mathbb{E}[g_{k,i}\mid \mathcal{F}_{k-1}] = \nabla_i f(x_k))\), and the variance is bounded \((\mathbb{E}[(g_{k,i}-\nabla_i f(x_k))^{2}\mid \mathcal{F}_{k-1}] \le \sigma^{2})\).

\paragraph{Main result.}
Following recent convergence analyses of sign-based optimizers, especially Lion \citep{dong2024convergence}, we assume a learning-rate schedule \(\eta_k = \eta/k^{3/4}\) and \(\beta_{1,k} = 1 - 1/\sqrt{k}\), the iterates generated by \textbf{Ano} satisfy:
\[
  \min_{0 \le k < K} \mathbb{E}[\|\nabla f(x_k)\|^{2}]
  \;=\; \mathcal{O}(K^{-1/4}\log K) \;=\; \mathcal{\tilde{O}}(K^{-1/4}),
\]
up to logarithmic factors, in the general non-convex stochastic setting.

\paragraph{Proof sketch.}
Using a sign-mismatch lemma (Lemma~\ref{lem:sign-lemma}, Appendix~\ref{app:convergence_proof}), we show that the probability of momentum--gradient disagreement decays as $\mathcal{O}(1/\sqrt{k})$.
Then, using \(L\)-smoothness and the previous lemma, we establish the inequality:
\[
  \mathbb{E}[f(x_{k+1})]
  \;\le\;
  \mathbb{E}[f(x_k)]
  \;-
  \frac{\eta_k}{\tilde{G}+\varepsilon}\,
  \mathbb{E}[\|\nabla f(x_k)\|^{2}]
  \;+
  \mathcal{O}\Bigl(\frac{\eta_k}{k^{1/4}}\Bigr)
  \;+
  \mathcal{O}(\eta_k^{2}),
\]
where the last two terms represent, respectively, stochastic noise and the adaptivity of the step size. 
\paragraph{Discussion.}
Our bound matches those recently established for sign-based optimizers such as \textit{Lion}\citep{dong2024convergence} and \textit{Signum}\citep{bernstein2018signsgd}, while relying on less restrictive assumptions (e.g., no requirement for growing batch sizes). Compared to adaptive schemes (SGD, Adam, Yogi) achieving $\mathcal{O}(K^{-1/2})$, our $\tilde{\mathcal{O}}(K^{-1/4})$ rate stems from a fundamental limitation of sign-based methods: ensuring stable updates requires decaying step sizes $\eta_k = \mathcal{O}(k^{-3/4})$ which, in turn, constrains the overall convergence rate. Full proofs are in Appendix~\ref{app:convergence_proof}.

\subsection{Noise Robustness Analysis}
\label{sec:noise-robustness}

We assess noise robustness by training a CIFAR-10 CNN adding Gaussian noise  $g_k \leftarrow g_k + \mathcal N(0,\sigma^2)$ into every mini-batch gradient before the optimizer update \citep{krizhevsky2009learning}. We vary only the noise level $\sigma$ over five values, keep each optimizer's default $\beta$ and recommended learning rate (Full hyperparameters tabs can be found in Appendix \ref{app:tuning_experiments}) for a computer vision task and report mean test accuracy over 5 seeds.\footnote{95\%CI omitted here for readability. The full table with 95\% CI is available in Appendix \ref{app:additionnal Results} - tab \ref{tab:noise-robustness-detailed}}

\begin{table}[h]
\centering
\begin{tabular}{lccccc}
\toprule
\textbf{Optimizer} & $\sigma\!=\!0$ & 0.01 & 0.05 & 0.10 & 0.20 \\
\midrule
Ano   & 82.10 & 78.71 & 70.88 & 65.93 & 59.54 \\
Adam  & 80.67\textbf{(-1.43)} & 75.97\textbf{(-2.74)} & 66.86\textbf{(-4.02)} & 60.83\textbf{(-5.10)} & 52.46\textbf{(-7.08)} \\
Lion  & 81.04\textbf{(-1.05)} & 77.80\textbf{(-0.91)} & 69.62\textbf{(-1.26)} & 64.02\textbf{(-1.91)} & 56.82\textbf{(-2.72)} \\
Grams & 71.34\textbf{(-10.76)} & 77.90\textbf{(-0.81)} & 70.57\textbf{(-0.31)} & 65.47\textbf{(-0.46)} & 58.80\textbf{(-0.74)} \\
\bottomrule
\end{tabular}
\caption{CIFAR-10 test accuracy (\%). Numbers in parentheses indicate the gap (percentage points) relative to \textit{Ano}.}
\label{tab:noise-robustness}
\end{table}

The performance gap between Ano/Adam and Ano/Lion widens with noise magnitude, reaching a $-6.8$-point advantage at $\sigma\!=\!0.20$ for Adam and a $-2.7$-point advantage for Lion (Table~\ref{tab:noise-robustness}). Another noteworthy observation is that Grams improves with a small injected noise ($\sigma = 0.01$). We hypothesize that this injected perturbation amplifies short‑term oscillations, enlarging its second‑moment (variance) estimate and thereby shrinking the step size, allowing Grams to refine its iterates more cautiously in a noisy landscape. Overall, these results support our central claim that decoupling update direction from magnitude stabilizes learning under high variance, avoiding the over-smoothing of momentum-coupled schemes.

\section{Experiments}
\label{sec:experiments}
We benchmark \textbf{Ano} and its extension \textbf{Anolog} against established optimizers and similar optimizers across computer vision, natural language processing, and deep reinforcement learning. Hyperparameters for all methods are selected via per-domain (DRL, CV, NLP) proxy grid searches: each optimizer receives a fixed 40 GPU-hour budget centered on litterature defaults(cf Appendix \ref{app:tuning_experiments}). Final results are averaged over 5 seeds for CV/NLP and 10 for DRL. Full search spaces, selected configurations, logs and codes are released (Appendices \ref{app:tuning_experiments}-\ref{app:all_hyperparameters_settings}).

\subsection{Computer Vision}
Computer vision is a historically important domain for benchmarking optimization algorithms. In this section, we evaluate Ano on CIFAR-100 \citep{krizhevsky2009learning} using ResNet-34 \citep{he2016deep}. 

\paragraph{CIFAR-100.} We use the standard CIFAR augmentation (random crop with 4-pixel padding + horizontal flip), following \cite{zagoruyko2017wideresidualnetworks}.

\begin{figure}[h]
\centering
\begin{minipage}{0.48\textwidth}
    \centering
    \includegraphics[width=\textwidth]{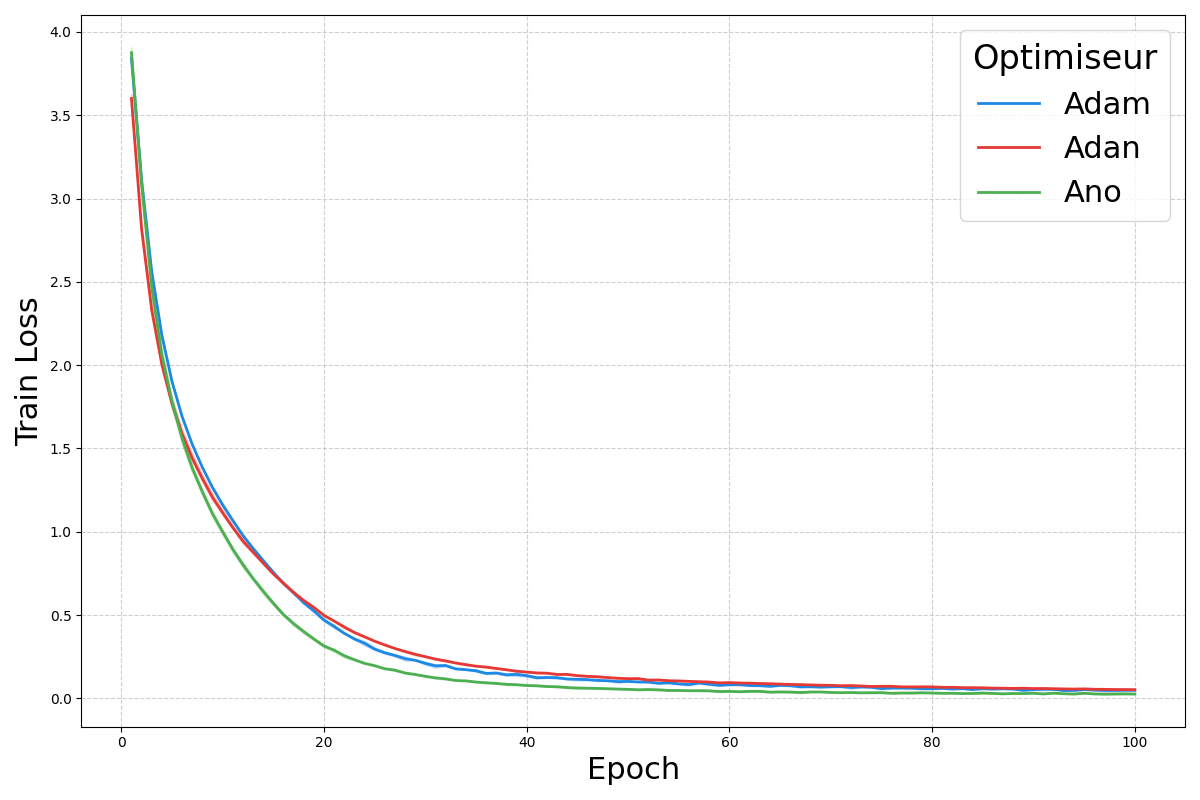}
    \caption{Training loss on CIFAR-100. Ano reduces loss faster and more stably than Adam.}
    \label{fig:cifar-loss}
\end{minipage}
\hfill
\begin{minipage}{0.35\textwidth}
\centering
\resizebox{\textwidth}{!}{%
\begin{tabular}{lcc}
    \toprule
    Optimizers & Test Accuracy & Training Loss  \\
    \midrule
    \textit{Default} \\
    Adam       & $69.57 \pm 0.22$ & $0.037$ \\
    Adan    & $69.87 \pm 0.09$ & $0.049$ \\
    Lion      & $66.25 \pm 0.61$ & $0.064$ \\
    Grams     & $68.33 \pm 0.40$ & $0.045$ \\
    Ano       & $\mathbf{70.31} \pm \mathbf{0.50}$ & $\mathbf{0.015}$ \\
    Anolog       & $64.84 \pm 1.19$ & $0.019$ \\
    \addlinespace
    \textit{Tuned} \\
    Adam      & $69.61 \pm 0.23$ & $0.042$ \\
    Adan      & $69.09 \pm 0.16$ & $0.049$ \\
    Lion      & $68.77 \pm 0.26$ & $0.048$ \\
    Grams     & $68.11 \pm 0.27$ & $0.048$ \\
    Ano       & $\mathbf{69.89} \pm \mathbf{0.42}$ & $\mathbf{0.022}$ \\
    Anolog       & $68.41 \pm 0.58$ & $\mathbf{0.032}$ \\
    \bottomrule
\end{tabular}
}
\captionof{table}{Test accuracy and training loss on CIFAR-100.}
\label{tab:cifar-results}
\end{minipage}
\end{figure}

As shown in Table~\ref{tab:cifar-results}, \textbf{Ano}  outperforms Adam and Adan in both default and tuned settings. Figure~\ref{fig:cifar-loss} further illustrates that Ano achieves faster convergence and lower training loss throughout training.



\subsection{Natural Language Processing}

As a cornerstone of modern artificial intelligence, natural language processing (NLP) warrants careful evaluation of optimization algorithms. The unique challenges of NLP, such as large parameter spaces, sparse gradients, and label noise, can affect optimizer performance~\citep{mosbach2021on}. We assess the effectiveness of \textbf{Ano} by comparing it to standard baselines on the GLUE benchmark~\citep{wang2019gluemultitaskbenchmarkanalysis}, covering eight sentence- and sentence-pair classification tasks (excluding WNLI, following standard practice due to its unreliable performance).

\paragraph{GLUE}
All runs finetune the public \texttt{bert-base-uncased} checkpoint~\citep{devlin2019bert} with max sequence length 128, batch size 32, weight decay 0.01, linear schedule with 10\% warmup, mixed precision, and 3 epochs (5 for small datasets: CoLA, MRPC, RTE, STS-B). Each configuration is repeated over 5 seeds.

\begin{table}[h]
\centering
\resizebox{\textwidth}{!}{
\begin{tabular}{lcccccccc|c}
\toprule
Optimizer & CoLA & MNLI & MRPC & QNLI & QQP & RTE & SST-2 & STS-B & Average \\
\midrule
\textit{Default} \\
Adam     & $\mathbf{59.40} \pm \mathbf{1.67}$ & $\mathbf{84.62} \pm \mathbf{0.10}$ & $88.06 \pm 0.82$ & $\mathbf{91.60} \pm \mathbf{0.15}$ & $89.64 \pm 0.10$ & $66.67 \pm 1.59$ & $92.73 \pm 0.46$ & $88.44 \pm 0.27$ & $82.64$ \\
Adan      & $55.65 \pm 0.53$ & $84.17 \pm 0.07$ & $84.40 \pm 0.83$ & $91.10 \pm 0.14$ & $88.85 \pm 0.04$ & $61.49 \pm 1.30$ & $92.02 \pm 0.20$ & $87.26 \pm 0.63$ & $80.62$ \\
Lion      & $57.76 \pm 1.76$ & $83.76 \pm 0.23$ & $87.13 \pm 0.81$ & $90.63 \pm 0.68$ & $89.46 \pm 0.05$ & $62.89 \pm 1.17$ & $91.67 \pm 0.52$ & $88.00 \pm 0.25$ & $81.41$ \\
Grams     & $56.15 \pm 0.92$ & $83.89 \pm 0.11$ & $84.92 \pm 0.60$ & $91.10 \pm 0.04$ & $88.48 \pm 0.08$ & $63.36 \pm 1.60$ & $92.34 \pm 0.19$ & $87.57 \pm 0.32$ & $80.98$ \\
\textbf{Ano (Ours)}     & $58.36 \pm 1.15$ & $84.33 \pm 0.17$ & $\mathbf{88.96} \pm \mathbf{0.50}$ & $91.25 \pm 0.46$ & $\mathbf{89.71} \pm \mathbf{0.11}$ & $\mathbf{69.25} \pm \mathbf{2.94}$ & $\mathbf{92.80} \pm \mathbf{0.41}$ & $88.70 \pm 0.12$ & $\mathbf{82.92}$ \\
\textbf{Anolog (Ours)}  & $57.07 \pm 2.41$ & $84.55 \pm 0.09$ & $88.26 \pm 0.76$ & $91.51 \pm 0.10$ & $\mathbf{89.71} \pm \mathbf{0.12}$ & $67.87 \pm 1.94$ & $92.75 \pm 0.15$ & $\mathbf{88.95} \pm \mathbf{0.32}$ & $82.58$ \\
\addlinespace
\textit{Tuned} \\
Adam     & $57.66 \pm 2.39$ & $84.18 \pm 0.16$ & $88.09 \pm 0.79$ & $91.12 \pm 0.17$ & $89.55 \pm 0.08$ & $68.47 \pm 2.91$ & $92.18 \pm 0.08$ & $88.76 \pm 0.36$ & $82.50$ \\
Adan      & $57.71 \pm 0.92$ & $\mathbf{84.84} \pm \mathbf{0.10}$ & $88.14 \pm 0.40$ & $91.71 \pm 0.21$ & $\mathbf{89.78} \pm \mathbf{0.07}$ & $65.40 \pm 1.66$ & $92.68 \pm 0.26$ & $88.57 \pm 0.44$ & $82.35$ \\
Lion      & $56.30 \pm 0.55$ & $82.38 \pm 0.06$ & $86.83 \pm 2.91$ & $90.36 \pm 0.42$ & $88.60 \pm 0.13$ & $63.75 \pm 5.50$ & $91.47 \pm 0.24$ & $88.58 \pm 0.40$ & $81.03$ \\
Grams     & $58.18 \pm 1.12$ & $84.64 \pm 0.11$ & $\mathbf{89.05} \pm \mathbf{0.36}$ & $\mathbf{91.79} \pm \mathbf{0.17}$ & $89.66 \pm 0.06$ & $67.22 \pm 2.55$ & $\mathbf{92.98} \pm \mathbf{0.31}$ & $88.53 \pm 0.26$ & $82.76$ \\
\textbf{Ano (Ours)}     & $\mathbf{58.51} \pm \mathbf{0.75}$ & $84.39 \pm 0.12$ & $88.53 \pm 1.14$ & $91.30 \pm 0.48$ & $89.73 \pm 0.07$ & $\mathbf{69.25} \pm \mathbf{3.01}$ & $92.66 \pm 0.14$ & $88.74 \pm 0.11$ & $\mathbf{82.89}$ \\
\textbf{Anolog (Ours)}  & $57.07 \pm 2.41$ & $84.55 \pm 0.09$ & $88.26 \pm 0.76$ & $91.51 \pm 0.10$ & $89.71 \pm 0.12$ & $67.87 \pm 1.94$ & $92.75 \pm 0.15$ & $\mathbf{88.95} \pm \mathbf{0.32}$ & $82.58$ \\
\bottomrule
\end{tabular}
}
\caption{Average performance (mean $\pm$ CI95\%) of different optimizers on GLUE benchmark tasks (best per column in bold).}
\label{tab:nlp-results}
\end{table}

As shown in Table~\ref{tab:nlp-results}, \textbf{Ano} and its logarithmic version (\textbf{Anolog}) achieve the highest average score across GLUE. This improvement is mainly driven by its performance on small-scale and inherently noisy tasks such as MRPC, CoLA and RTE, which are known to exhibit high gradient variance. These results suggest that Ano's advantages are most pronounced on noisy, low-resource tasks as we can expect.

\subsection{Deep Reinforcement learning}

Reinforcement learning (RL) presents unique challenges, such as high gradient variance and the non-stationarity of the environment, both of which can strongly affect the behavior of optimization algorithms \citep{henderson2018matters, franccois2018introduction}. Because RL is inherently noisy and largely non-stationary, it is precisely the setting where Ano is expected to provide the largest benefits. For computational efficiency, we perform grid searches on \textit{HalfCheetah} using 100k training steps. We acknowledge that this shorter horizon may bias tuning toward larger learning rates, which can be suboptimal for longer runs (e.g., 1M steps). To mitigate this, for each baseline optimizer we report the best performance between its default and tuned hyperparameters, ensuring that no method is disadvantaged by the tuning protocol. The tables indicate which configuration (default or tuned) was used, with full results provided in \ref{app:additionnal Results}.

\paragraph{Soft-Actor Critic.}
In this section, we employ the Soft Actor-Critic (SAC) algorithm \citep{haarnoja2018soft} in the MuJoCo suite from the Gymnasium framework \citep{todorov2012mujoco, towers2024gymnasiumstandardinterfacereinforcement}. We reuse the standard SAC hyperparameter (full list in Appendix \ref{app:all_hyperparameters_settings} - Tab \ref{tab:sac_hparams_main}), as reported in the original work and subsequent studies and only vary the optimizer for actor, critics, and temperature. For each optimizer, we run 10 seeds, with 1M steps. We report below the average mean score on a 50-episodes test evaluation.

\begin{figure}[h]
    \centering
    \begin{subfigure}[b]{0.30\textwidth}
        \centering
        \includegraphics[width=\textwidth]{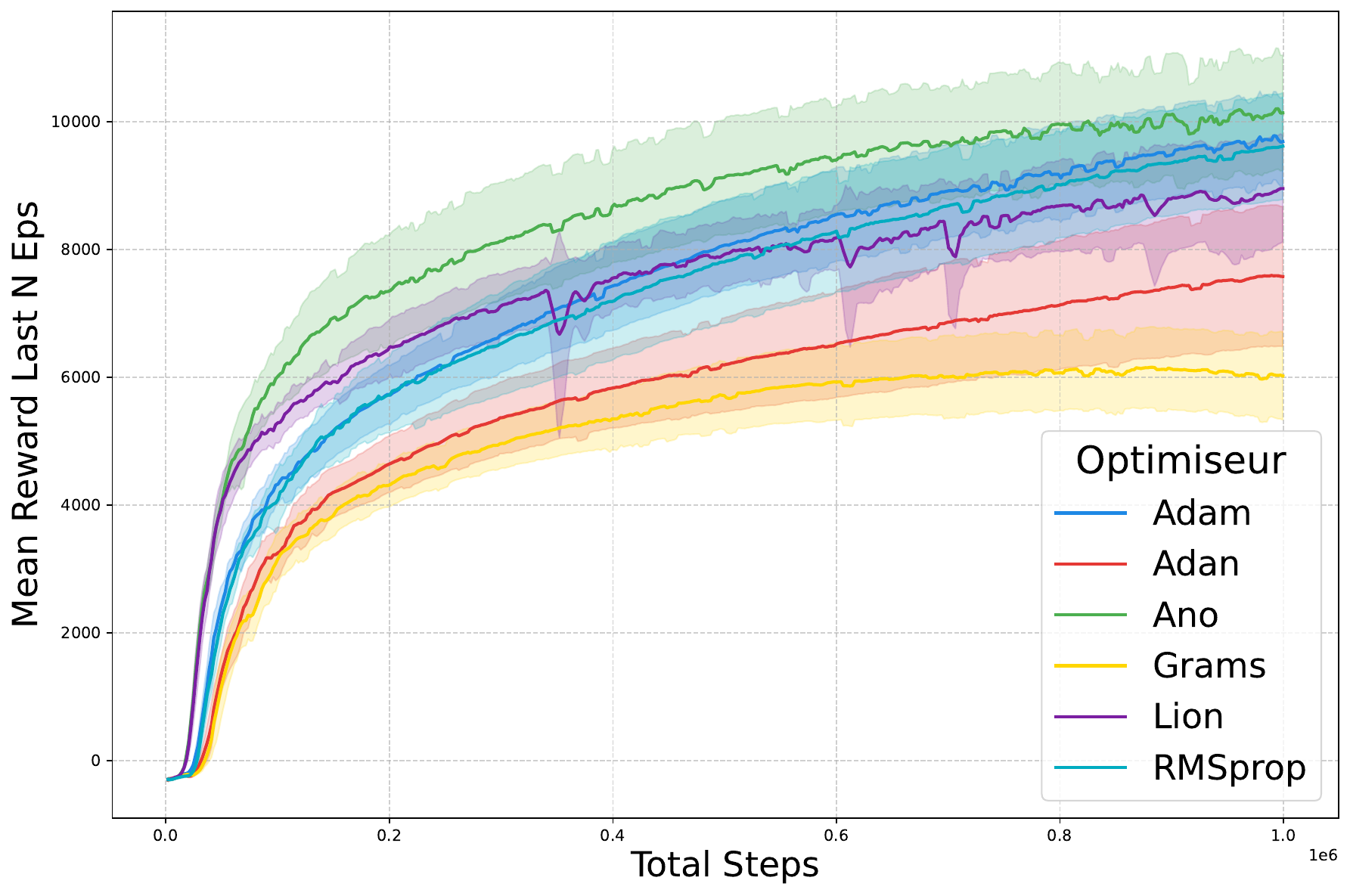}
        \caption{Halfcheetah-v5}
        \label{fig:halfcheetah-plot}
    \end{subfigure}
    \hfill
    \begin{subfigure}[b]{0.30\textwidth}
        \centering
        \includegraphics[width=\textwidth]{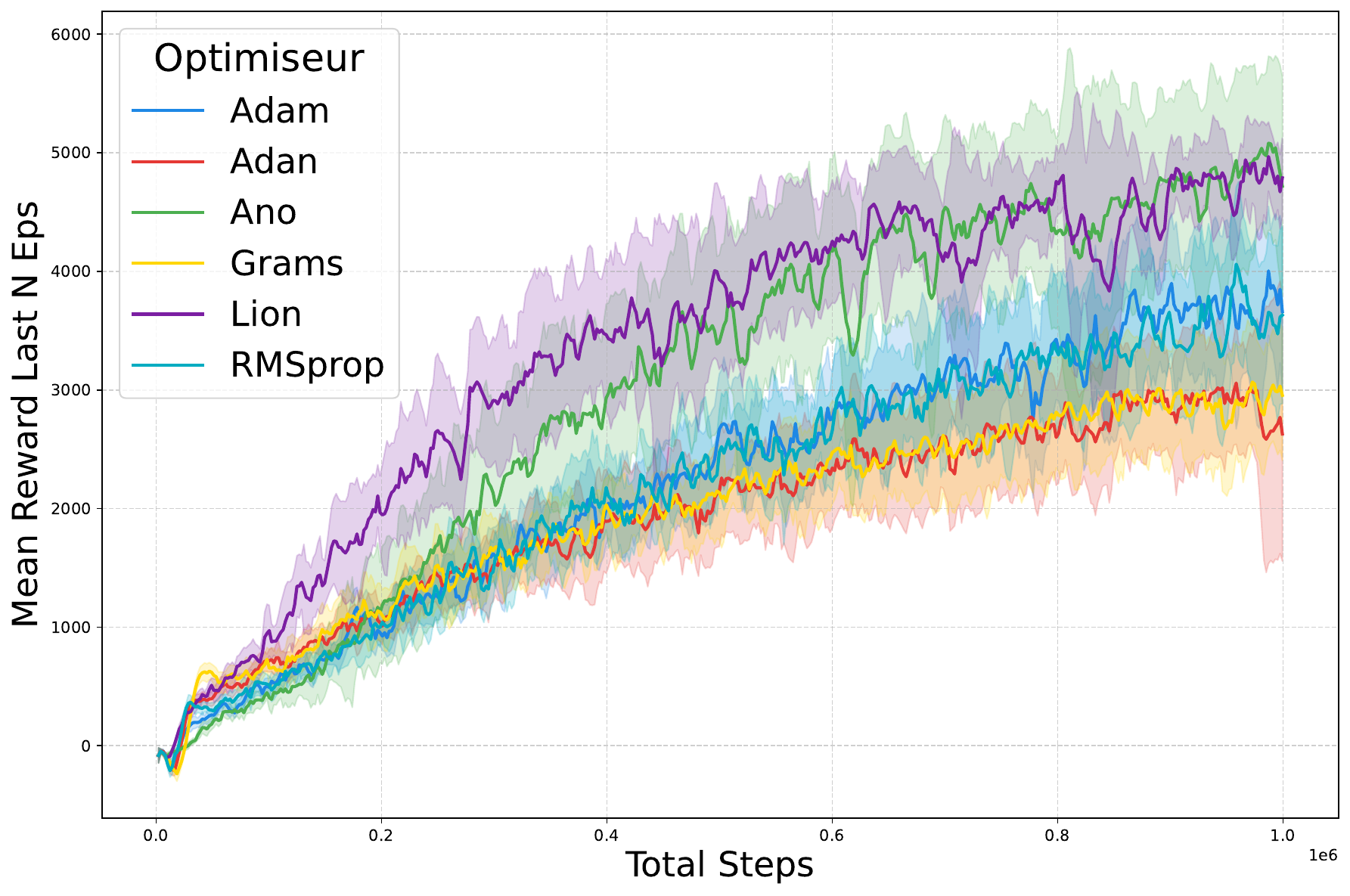}
        \caption{Ant-v5}
        \label{fig:ant-plot}
    \end{subfigure}
    \hfill
    \begin{subfigure}[b]{0.30\textwidth}
        \centering
        \includegraphics[width=\textwidth]{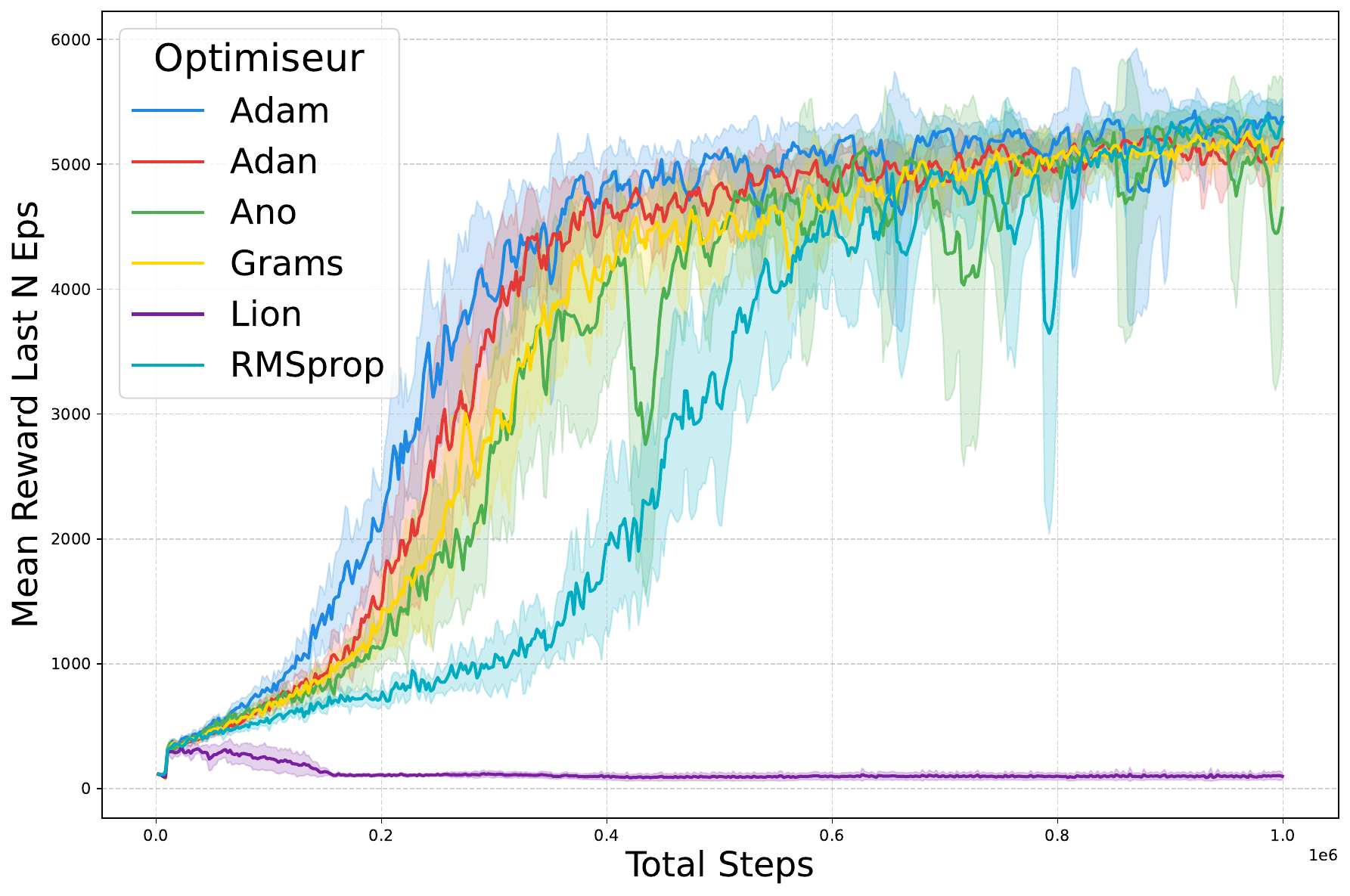}
        \caption{Humanoid-v5}
        \label{fig:humanoid-plot}
    \end{subfigure}
    \hfill
    \begin{subfigure}[b]{0.30\textwidth}
        \centering
        \includegraphics[width=\textwidth]{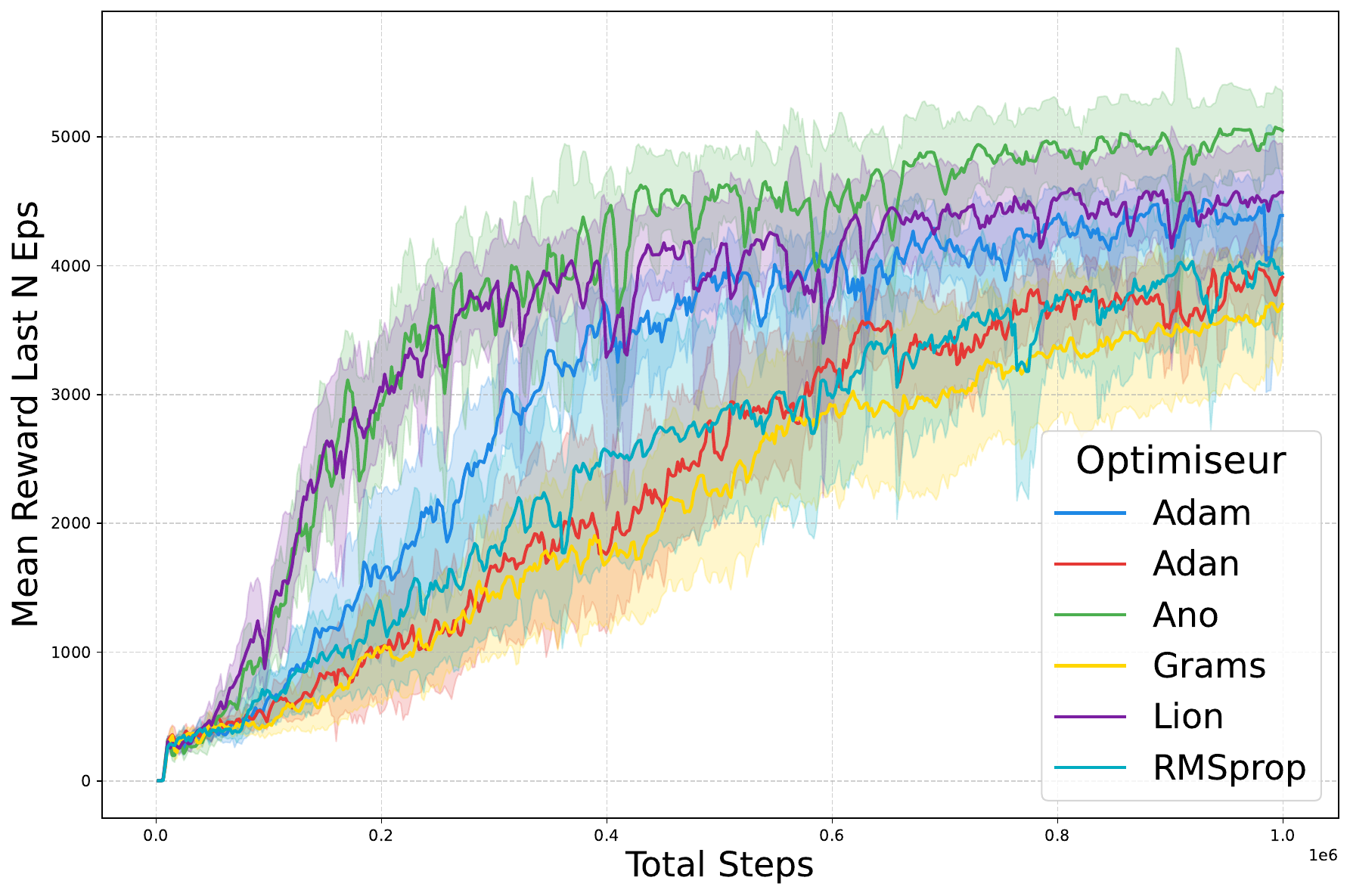}
        \caption{Walker2d-v5}
        \label{fig:walker2d-plot}
    \end{subfigure}
    \begin{subfigure}[b]{0.30\textwidth}
        \centering
        \includegraphics[width=\textwidth]{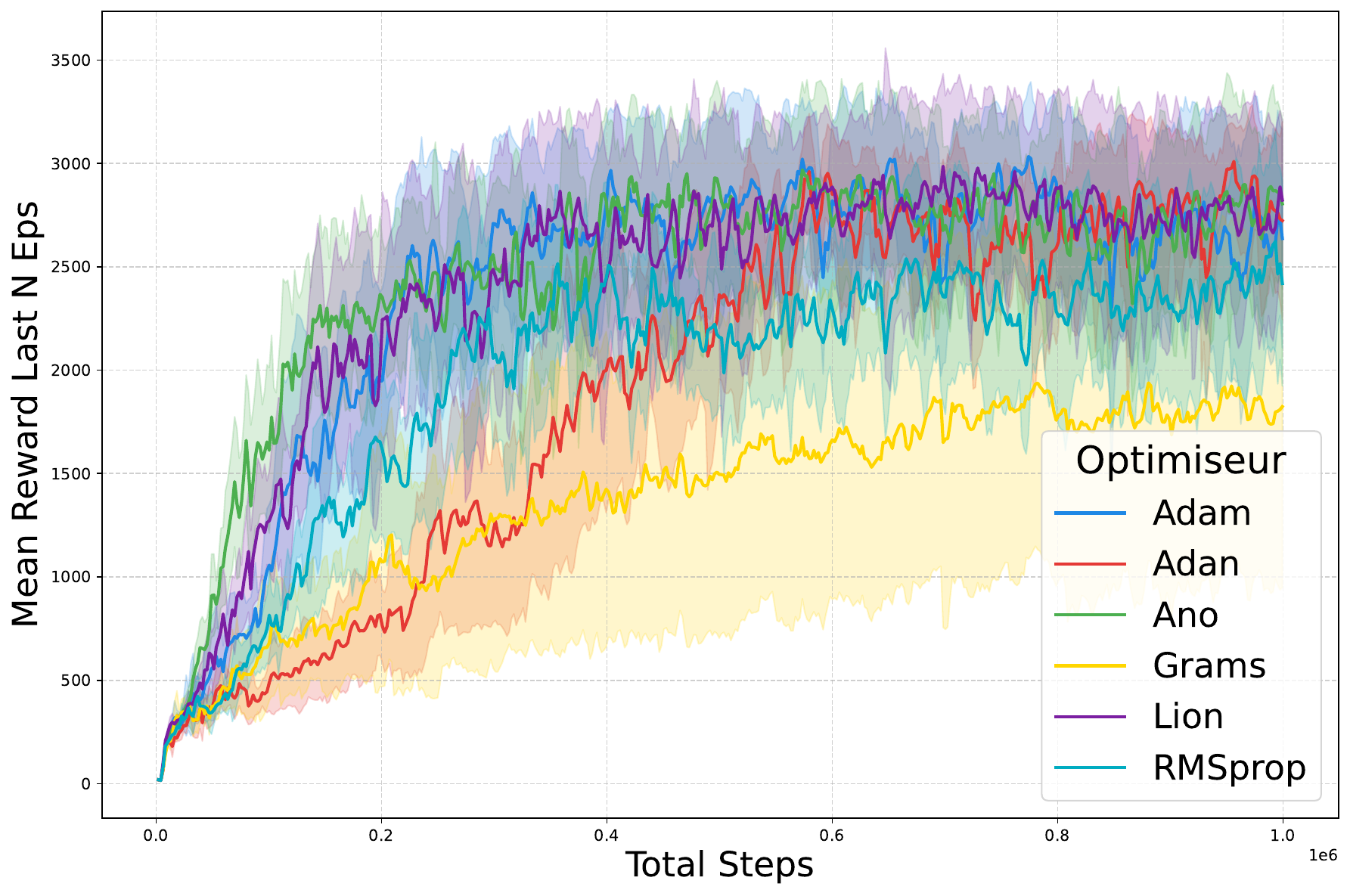}
        \caption{Hopper-v5}
        \label{fig:hopper-plot}
    \end{subfigure}
    \caption{Rewards over time for several MuJoCo environments, with baselines and 95\% confidence intervals. The green curve corresponds to \textbf{Ano (ours)}.}
    \label{fig:MuJoCo_rewardsovertime}
\end{figure}

\begin{table}[h]
\centering
\resizebox{\textwidth}{!}{
\begin{tabular}{l|ccccc|cc}
\toprule
Optimizers & HalfCheetah & Ant & Humanoid & Walker2d & Hopper & Mean Rank & Norm. Avg \\
\midrule
\textit{Default} \\
Adam  & $10549.48 \pm 721.55$ & $4336.64 \pm 698.72$ & $\mathbf{5357.14} \pm \mathbf{211.97}$ & $4462.51 \pm 588.77$ & $3164.71 \pm 600.48$ & $3.4$ & $90.66$ \\
RMSprop     & $10506.23 \pm 852.19$  & $4234.37 \pm 763.65$  & $5395.51 \pm 126.80$  & $4160.06 \pm 480.62$  & $2973.86 \pm 571.05$  & $5.6$ & $87.83$ \\
Adan  & $7805.20 \pm 1154.02$ & $2985.19 \pm 1018.79$ & $5080.74 \pm 305.26$ & $4092.13 \pm 379.92$ & $3222.62 \pm 235.25$ & $5.6$ & $78.38$ \\
Lion  & $9527.96 \pm 805.42$ & $4948.26 \pm 243.05$ & $98.22 \pm 32.33$ & $4612.63 \pm 367.77$ & $3087.27 \pm 628.06$ & $4.4$ & $71.74$ \\
Grams & $6782.60 \pm 715.12$ & $3207.30 \pm 531.06$ & $5104.10 \pm 692.14$ & $3656.66 \pm 658.82$ & $1475.34 \pm 927.22$ & $6.4$ & $65.88$ \\
\textbf{Ano (Ours)}    & $\mathbf{10864.09} \pm \mathbf{1052.24}$ & $\mathbf{5285.44} \pm \mathbf{729.86}$ & $5255.62 \pm 815.92$ & $\mathbf{5227.86} \pm \mathbf{436.49}$ & $\mathbf{3535.32} \pm \mathbf{780.96}$ & $\mathbf{1.4}$ & $\mathbf{99.48}$ \\
\textbf{Anolog (Ours)} & $10557.05 \pm 560.70$ & $5089.12 \pm 522.94$ & $5242.78 \pm 173.98$ & $4606.02 \pm 478.36$ & $3314.12 \pm 539.95$ & $2.6$ & $94.50$ \\
\addlinespace
\textit{Best Version} \\
Adam [Default] & $10549.48 \pm 721.55$ & $4336.64 \pm 698.72$ & $\mathbf{5357.14} \pm \mathbf{211.97}$ & $4462.51 \pm 588.77$ & $3164.71 \pm 600.48$ & $4.6$ & $90.38$ \\
RMSprop [Default] & $10506.23 \pm 852.19$  & $4234.37 \pm 763.65$  & $5395.51 \pm 126.80$  & $4160.06 \pm 480.62$  & $2973.86 \pm 571.05$  & $5.6$ & $87.83$ \\
Adan [Tuned] & $10822.40 \pm 475.75$ & $5239.69 \pm 270.96$ & $4792.62 \pm 904.44$ & $4686.83 \pm 502.28$ & $3514.42 \pm 143.57$ & $3.2$ & $95.01$ \\
Lion [Tuned] & $10482.06 \pm 1018.86$ & $4848.41 \pm 821.79$ & $1349.15 \pm 1322.56$ & $4876.76 \pm 253.22$ & $\mathbf{3592.87} \pm \mathbf{70.26}$ & $4.2$ & $81.30$ \\
Grams [Tuned] & $10533.70 \pm 866.69$ & $4607.59 \pm 505.08$ & $5147.04 \pm 487.55$ & $4644.45 \pm 498.08$ & $3147.82 \pm 605.03$ & $5.0$ & $91.20$ \\
\textbf{Ano (Ours)} [Default] & $\mathbf{10864.09} \pm \mathbf{1052.24}$ & $\mathbf{5285.44} \pm \mathbf{729.86}$ & $5255.62 \pm 815.92$ & $\mathbf{5227.86} \pm \mathbf{436.49}$ & $3535.32 \pm 780.96$ & $\mathbf{1.6}$ & $\mathbf{99.16}$ \\
\textbf{Anolog (Ours)} [Default] & $10557.05 \pm 560.70$  & $5089.12 \pm 522.94$  & $5242.78 \pm 173.98$  & $4606.02 \pm 478.36$  & $3314.12 \pm 539.95$  & $2.6$ & $94.20$ \\
\bottomrule
\end{tabular}
}
\caption{Comparison of the IQM $\pm$ CI95\% of different optimizers across environments.}
\label{tab:sac-comparative}
\end{table}

As summarized in Table~\ref{tab:sac-comparative}, Ano performs favorably compared to Adam and other baselines across the MuJoCo tasks. On average, it achieves a \textbf{+10\%} improvement in normalized score\footnote{The normalized average is obtained by linearly rescaling each score between the minimum and maximum values observed across optimizers, followed by averaging. Complete normalized results are reported in Table~\ref{tab:drl-norm}.}, both under default and tuned hyperparameters. Without tuning, Ano ranks first in 4    out of 5 tasks; with best version, it remains the top optimizer in 3 out of 5 tasks. Although not always the best performer, Ano consistently ranks among the strongest optimizers, with its scores typically within or close to the 95\% confidence intervals of the best baselines. Figure~\ref{fig:MuJoCo_rewardsovertime} shows that Ano reaches the final performance of Adam using approximately 50–70\% fewer training steps, except for \textit{Humanoid}. To address potential concerns about hyperparameter tuning, we evaluated the sensitivity to learning rate and momentum coefficients on a 100k-step \textit{HalfCheetah} proxy (Figure~\ref{fig:resume_anoadam_hyperparamscomparison}; see \ref{app:tuning_experiments} for full details). Ano shows lower sensitivity than Adam to both learning rate and betas, suggesting that its performance gains are not solely due to more favorable hyperparameter choices.

\begin{figure}[H]
    \centering
    \includegraphics[width=0.9\textwidth]{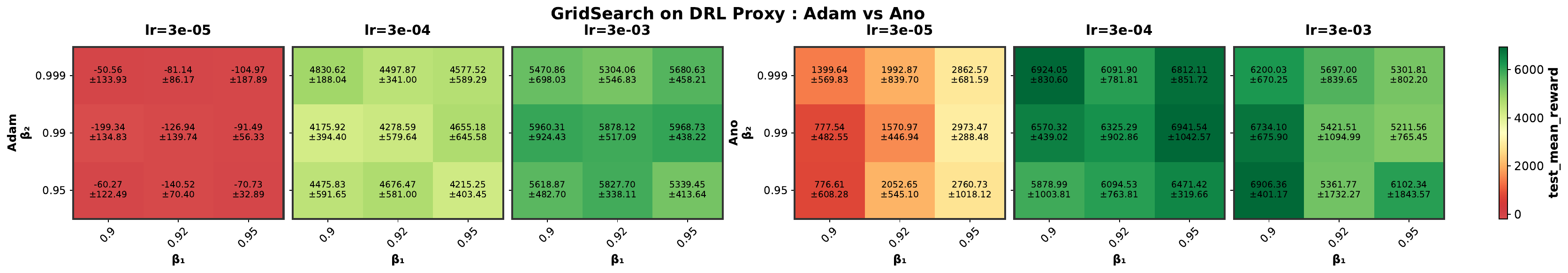}
    \caption{Hyperparameter robustness on a MuJoCo proxy (HalfCheetah with SAC). \textbf{Adam} on the left, \textbf{Ano(ours)} on the right.}
    \label{fig:resume_anoadam_hyperparamscomparison}
\end{figure}

\paragraph{Proximal Policy Optimization.}
To assess the generality of our findings, we additionally evaluate a discrete-action variant of Proximal Policy Optimization (PPO)~\citep{schulman2017proximalpolicyoptimizationalgorithms} on the Atari Learning Environment (ALE)~\citep{bellemare2013arcade}. For compute efficiency, we report results on the \textit{Atari-5} subset proposed by Aitchison et al.~\citep{aitchison2023atari}, which explains $98.4\%$ of the variance in full-suite performance ($R^2{=}0.984$). We use the reference PPO implementation from CleanRL~\citep{huang2022cleanrl} and keep its default network and optimization hyperparameters (full list in Appendix~\ref{app:all_hyperparameters_settings} - tab \ref{tab:ppo_hparams_main}). Environments are instantiated via EnvPool~\citep{weng2022envpoolhighlyparallelreinforcement} for fast batched simulation. Observations are resized to $84{\times}84$, converted to grayscale, and stacks of 4 consecutive frames are fed to the agent. We apply action repeat of 4, up to 30 no-op actions at reset, the ALE sticky-action protocol with repeat probability $0.25$~\citep{machado2018revisiting}, and FireReset where required. We used the full action set. During training, rewards are clipped to $[-1,1]$; evaluation uses unclipped rewards. We train for $10$M agent steps (with action repeat 4, i.e., $\approx 40$M ALE frames), evaluating every $200$k steps on 50 runs. Each checkpoint is evaluated with the same wrappers as training (except reward clipping). We report the average final score on the last evaluation. The normalization average is computed in the same ways that previous part\footnote{We normalized each score by environment by the theoretically minimum (-18 for DoubleDunk, 0 for the others) and the maximum score obtained by the baselines, we then compute the mean average normalized score.}.

\begin{figure}[h]
    \centering
    \begin{subfigure}[b]{0.30\textwidth}
        \centering
        \includegraphics[width=\textwidth]{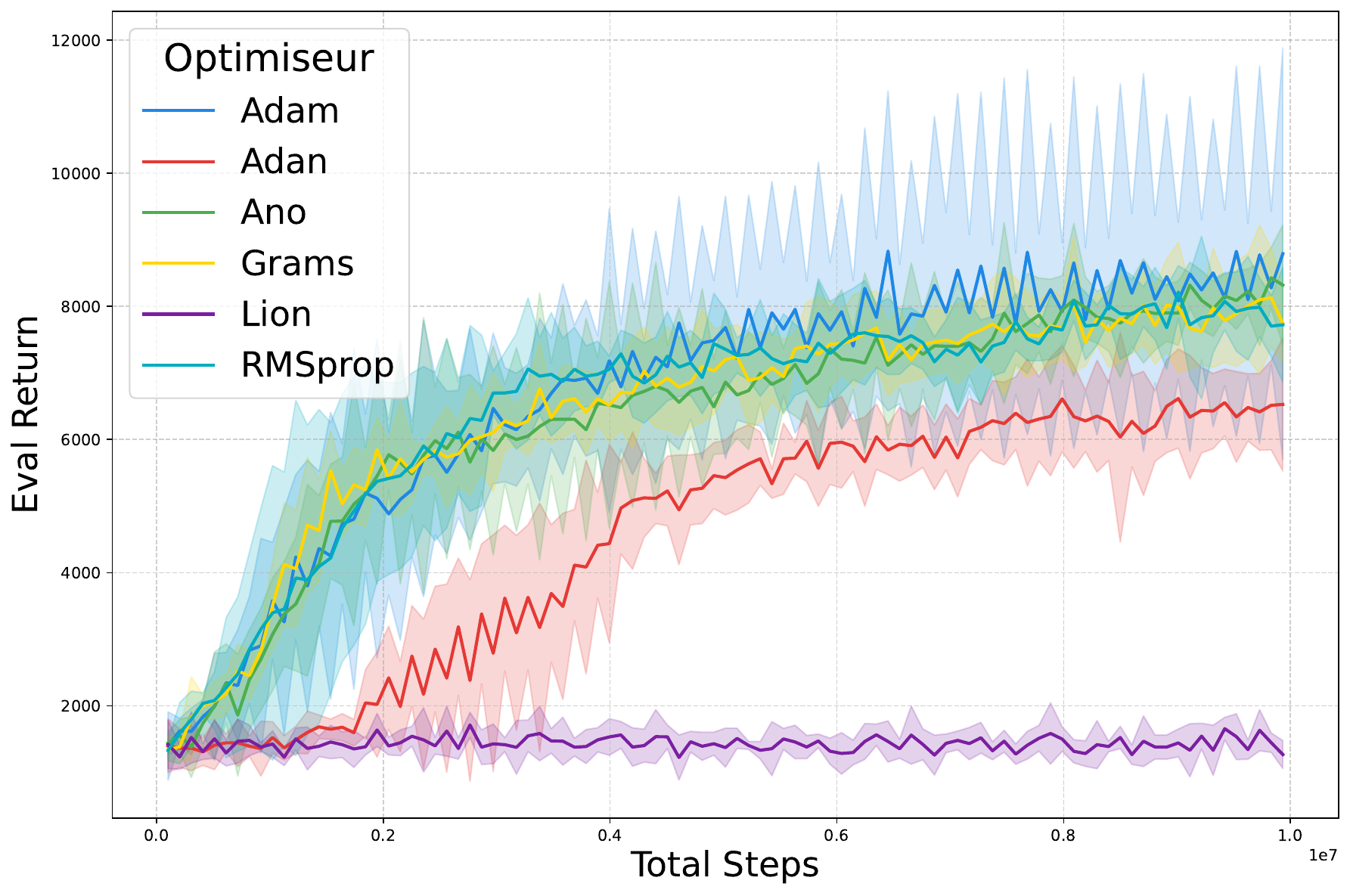}
        \caption{BattleZone-v5}
        \label{fig:battlezone-plot}
    \end{subfigure}
    \hfill
    \begin{subfigure}[b]{0.30\textwidth}
        \centering
        \includegraphics[width=\textwidth]{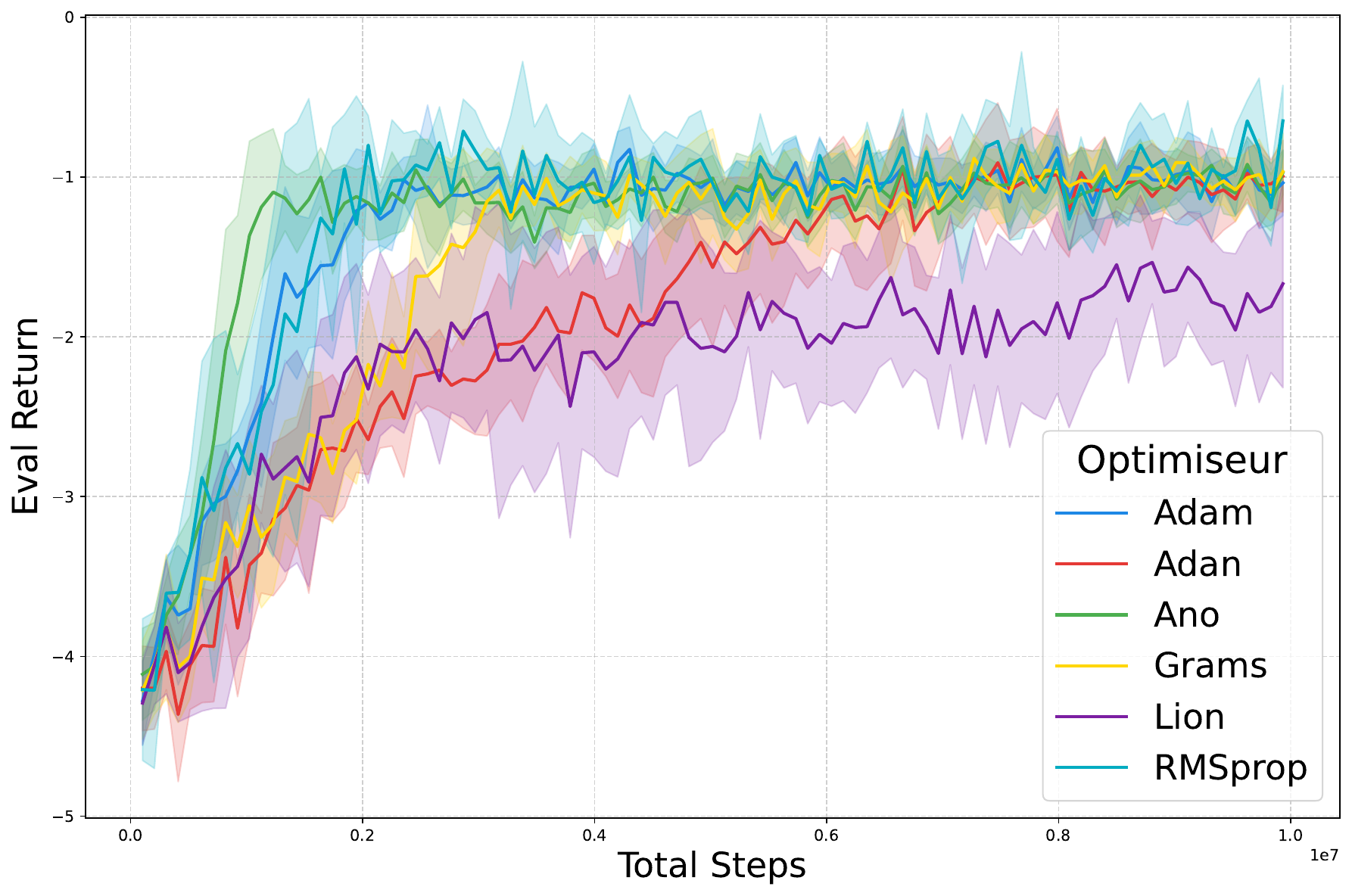}
        \caption{DoubleDunk-v5}
        \label{fig:doubledunk-plot}
    \end{subfigure}
    \hfill
    \begin{subfigure}[b]{0.30\textwidth}
        \centering
        \includegraphics[width=\textwidth]{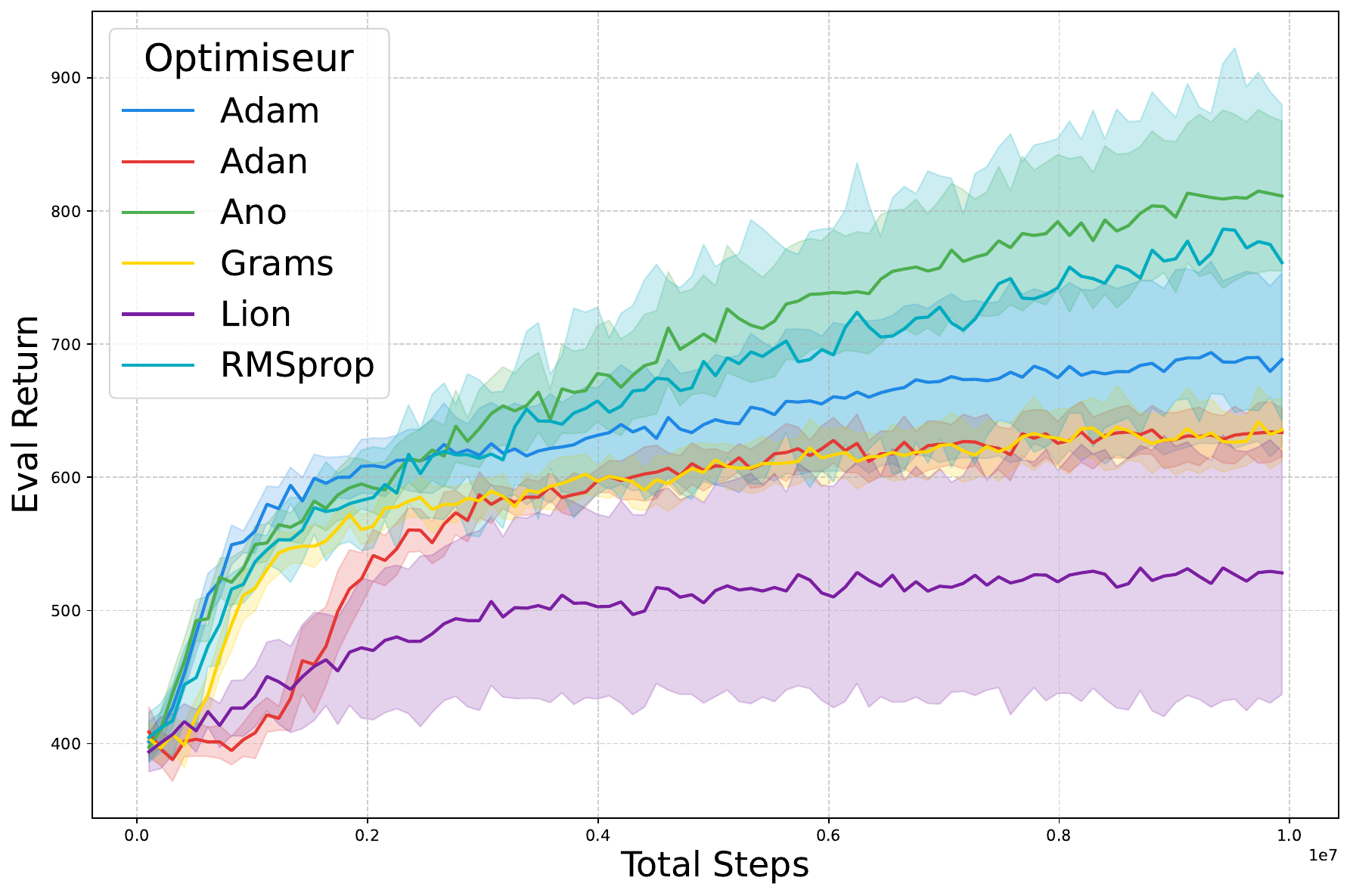}
        \caption{NameThisGame-v5}
        \label{fig:namethisgame-plot}
    \end{subfigure}
    \hfill
    \begin{subfigure}[b]{0.30\textwidth}
        \centering
        \includegraphics[width=\textwidth]{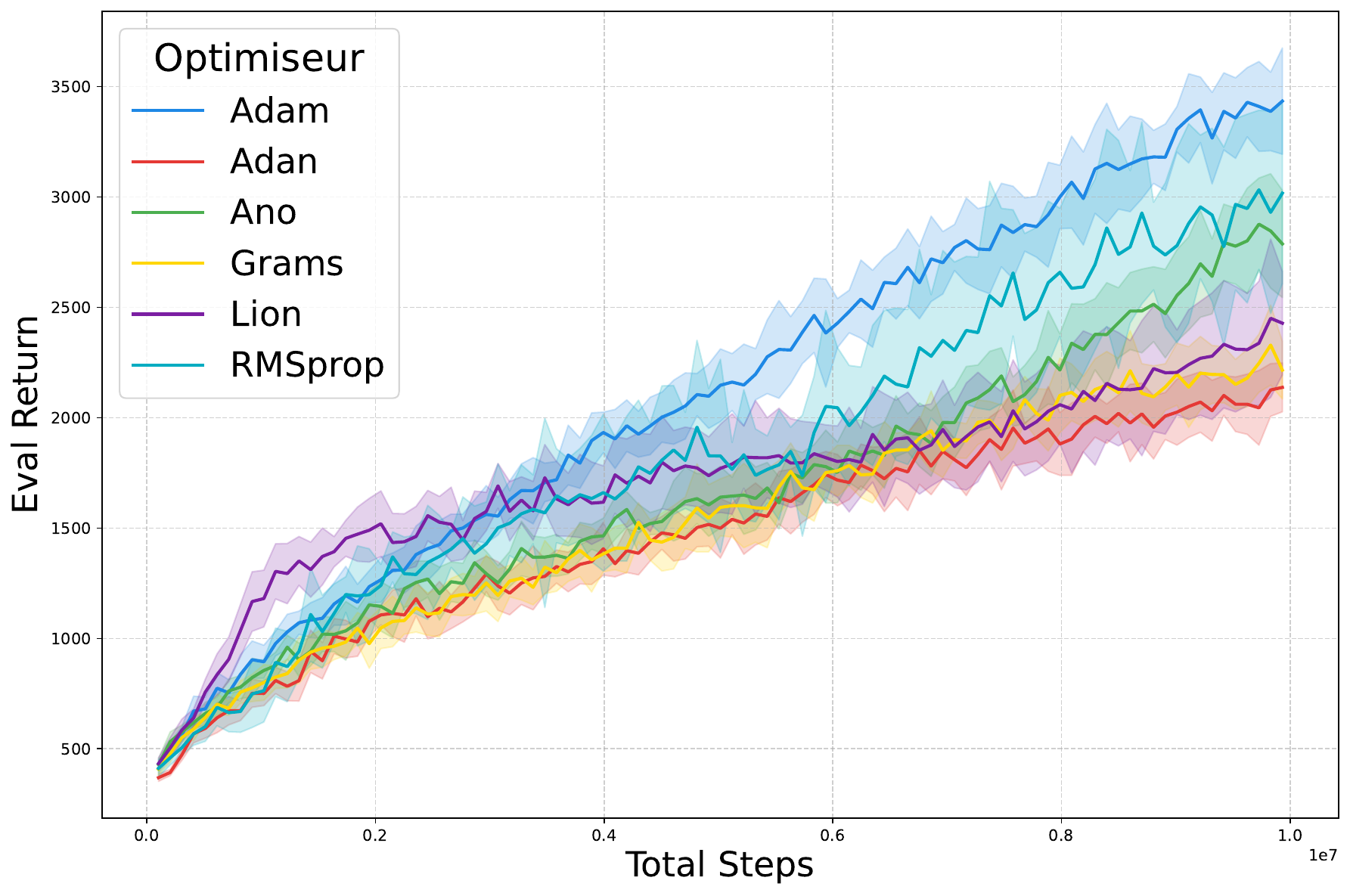}
        \caption{Phoenix-v5}
        \label{fig:phoenix-plot}
    \end{subfigure}
    \begin{subfigure}[b]{0.30\textwidth}
        \centering
        \includegraphics[width=\textwidth]{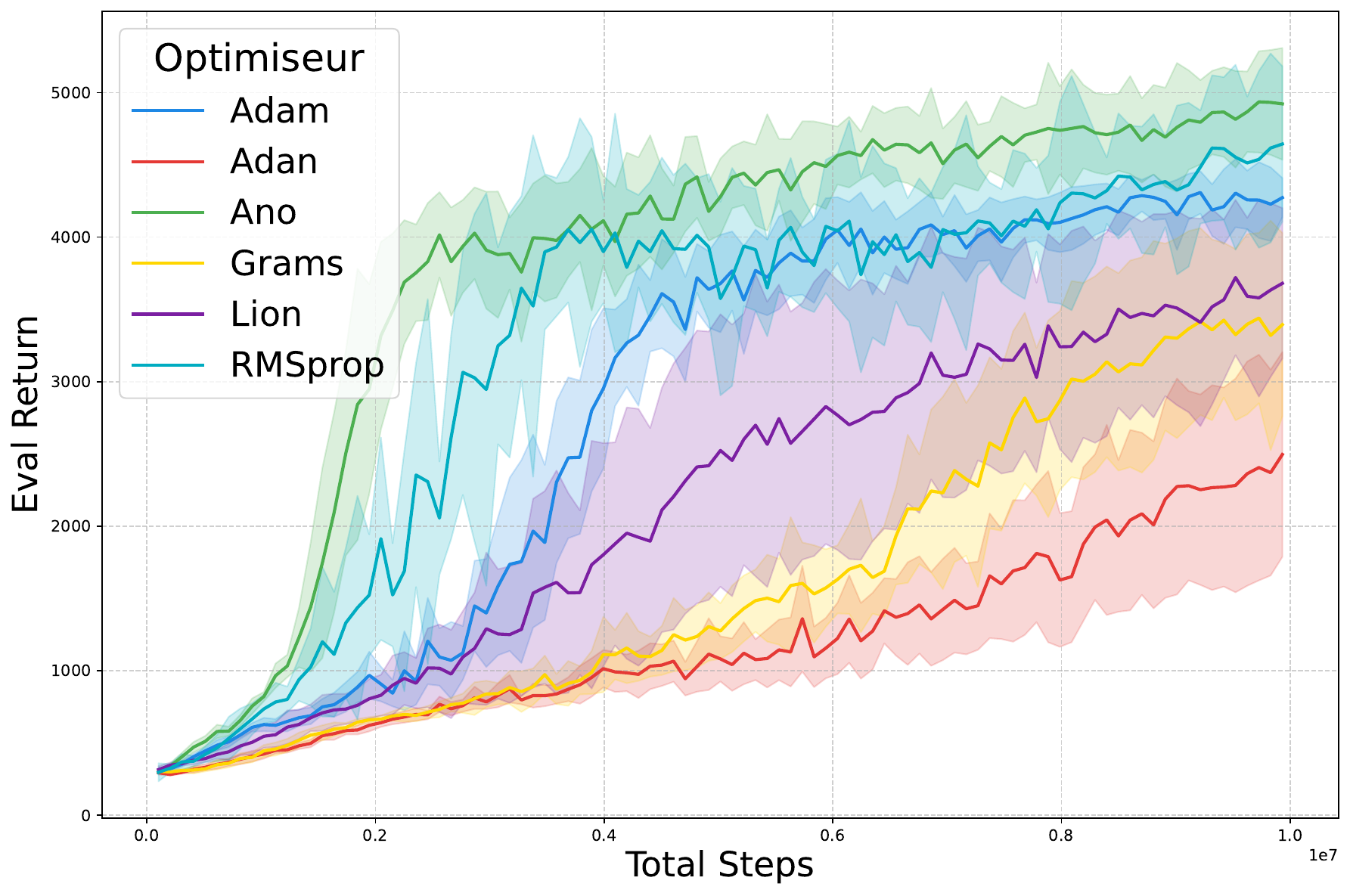}
        \caption{Qbert-v5}
        \label{fig:qbert-plot}
    \end{subfigure}
    \caption{Rewards over time for Atari5 Benchmark, with baselines and 95\% confidence intervals. The green curve corresponds to \textbf{Ano (ours)}.}
    \label{fig:atari-rewardsovertime}
\end{figure}

\begin{table}[h]
\centering
\resizebox{\textwidth}{!}{
\begin{tabular}{l|ccccc|cc}
\toprule
Optimizers & BattleZone-v5 & DoubleDunk-v5 & NameThisGame-v5 & Phoenix-v5 & Qbert-v5 & Mean Rank & Norm. Avg \\
\midrule
\textit{Default} \\
Adam        & $7615.00 \pm 1299.89$ & $-1.08 \pm 0.21$ & $665.35 \pm 64.89$ & $\textbf{3443.30} \pm \textbf{240.68}$ & $4257.80 \pm 135.90$ & $4.4$ & $87.54$ \\
RMSprop     & $7686.67 \pm 859.43$  & $\textbf{-0.67} \pm \textbf{0.22}$ & $798.00 \pm 118.66$ & $3031.13 \pm 410.22$ & $4585.67 \pm 538.44$ & $2.4$ & $90.09$ \\
Adan        & $6480.00 \pm 702.14$  & $-0.91 \pm 0.23$ & $638.35 \pm 18.08$  & $2106.90 \pm 110.74$ & $2665.00 \pm 708.69$ & $5.4$ & $74.11$ \\
Lion        & $1392.00 \pm 139.66$  & $-1.46 \pm 0.65$ & $508.15 \pm 91.28$  & $2432.35 \pm 234.61$ & $3768.00 \pm 520.95$ & $6.8$ & $61.36$ \\
Grams       & $7908.00 \pm 687.61$  & $-0.93 \pm 0.24$ & $633.80 \pm 23.52$  & $2234.40 \pm 130.73$ & $3670.12 \pm 634.75$ & $4.6$ & $82.41$ \\
\textbf{Ano (Ours)}    & $\mathbf{8095.00} \pm \mathbf{494.70}$ & $-0.97 \pm 0.14$ & $\mathbf{845.35} \pm \mathbf{56.39}$ & $2813.00 \pm 243.04$ & $\mathbf{4828.80} \pm \mathbf{386.73}$ & $\mathbf{2.2}$ & $\mathbf{95.99}$ \\
\textbf{Anolog (Ours)} & $7485.00 \pm 1010.66$ & $-0.98 \pm 0.14$ & $751.05 \pm 74.60$ & $2983.00 \pm 236.42$ & $4773.75 \pm 602.39$ & $3.6$ & $93.00$ \\
\addlinespace
\textit{Best Version} \\
Adam [Baseline] & $7615.00 \pm 1299.89$ & $-1.08 \pm 0.21$ & $665.35 \pm 64.89$  & $\textbf{3443.30} \pm \textbf{240.68}$ & $4257.80 \pm 135.90$ & $4.4$ & $87.54$ \\
RMSprop [Baseline] & $7686.67 \pm 859.43$  & $\textbf{-0.67} \pm \textbf{0.22}$ & $798.00 \pm 118.66$  & $3031.13 \pm 410.22$ & $4585.67 \pm 538.44$ & $2.4$ & $90.09$ \\
Adan [Tuned] & $4840.00 \pm 2601.35$ & $-0.95 \pm 0.23$ & $754.20 \pm 51.44$   & $2647.20 \pm 534.22$ & $4524.75 \pm 448.74$ & $4.4$ & $79.67$ \\
Lion [Baseline] & $1392.00 \pm 139.66$  & $-1.46 \pm 0.65$ & $508.15 \pm 91.28$   & $2432.35 \pm 234.61$ & $3768.00 \pm 520.95$ & $6.8$ & $61.36$ \\
Grams [Tuned] & $7715.00 \pm 627.92$  & $-1.35 \pm 0.75$ & $690.40 \pm 76.27$   & $1989.15 \pm 201.64$ & $5049.25 \pm 624.43$ & $4.4$ & $82.26$ \\
\textbf{Ano (Ours)} [Tuned] & $\mathbf{8625.00} \pm \mathbf{1870.44}$ & $-0.91 \pm 0.21$ & $\mathbf{828.10} \pm \mathbf{67.66}$ & $2824.85 \pm 226.30$ & $\mathbf{5960.88} \pm \mathbf{912.36}$ & $\mathbf{1.8}$ & $\mathbf{96.13}$ \\
\textbf{Anolog (Ours)} [Baseline] & $7485.00 \pm 1010.66$ & $-0.98 \pm 0.14$ & $751.05 \pm 74.60$ & $2983.00 \pm 236.42$ & $4773.75 \pm 602.39$ & $3.8$ & $88.48$ \\
\bottomrule
\end{tabular}
}
\caption{Comparison of the IQM $\pm$ CI95\% of different optimizers across Atari environments.}
\label{tab:drl-comparative-atari}
\end{table}

As shown in Table~\ref{tab:drl-comparative-atari}, Ano and RMSprop perform strongest overall among the baselines. In the default setting, their mean ranks are 2.2 and 2.4; with the best version, 1.8 and 2.4, respectively. Ano achieves the highest average normalized score and mean rank in both regimes, with approximately 6–7\% higher normalized average than RMSprop and 10\% higher than Adam. Notably, Ano outperforms Adam on \textit{BattleZone}, \textit{Name This Game}, and \textit{Q*bert}, whereas Adam and RMSprop perform best on \textit{Phoenix}. For \textit{DoubleDunk}, all optimizers (except Lion) plateau at similar levels (Fig.~\ref{fig:doubledunk-plot}), so no clear winner emerges.

\section{Ablation Study}\label{sec:ablation}

We conduct ablation studies on Ano and on its variant Anolog to quantify the contribution of each design component and, to justify adopting a logarithmic momentum schedule instead of a theoretically motivated square-root schedule. 

We conduct ablation studies on Ano and its variant Anolog to quantify the contribution of each design component and to justify using a logarithmic momentum schedule rather than the theoretically motivated square-root schedule.
Table~\ref{tab:ablation-study-ano} summarizes all ablated variants. To provide a comprehensive evaluation, we compare performance on four benchmarks: \textit{HalfCheetah} from MuJoCo \citep{todorov2012mujoco}, CIFAR100 \citep{krizhevsky2009learning}, and two tasks from the GLUE benchmark: the small and noisy MRPC task, and the larger, more stable SST2 task. We follow the same experimental protocols as in Section~\ref{sec:experiments}, except that for \textit{HalfCheetah} we train for 500k steps.

\begin{table}[H]
\centering
\resizebox{\textwidth}{!}{%
\begin{tabular}{l|cccccc|cccc}
\toprule
\textbf{Optimizer} 
& \shortstack{Second Mom.\\Rule} 
& \shortstack{Grad.\\Norm} & \shortstack{Mom.\\Norm} & \shortstack{Mom.\\Dir.} 
& \shortstack{Decoup.\\WD} 
& \shortstack{$\beta_{1,k}$} 
& \shortstack{Score\\DRL} 
& \shortstack{Acc.~(\%)\\CIFAR-100} 
& \shortstack{Acc.~(\%)\\MRPC} 
& \shortstack{Acc.~(\%)\\SST-2} \\ 
\midrule
\textit{Ano ablation} \\
Adam                                  & Adam & \xmark & \checkmark & \checkmark & \checkmark & $\beta_{1}$ & $7480.55 \pm 1323.36$ & $69.84 \pm 0.22$ & $85.93 \pm 0.92$ & $\textbf{93.03} \pm \textbf{0.30}$ \\
YogiTweaked                           & Yogi+$\beta_2$-decay & \xmark & \checkmark & \checkmark & \checkmark & $\beta_{1}$ & $8540.52 \pm 671.22$ & $68.62 \pm 2.36$ & $85.25 \pm 1.22$ & $92.75 \pm 0.32$ \\
Grams                                 & Adam & \xmark & \checkmark & \xmark & \checkmark & $\beta_{1}$ & $5567.12 \pm 782.37$  & $70.20 \pm 0.17$ & $82.25 \pm 0.74$ & $92.29 \pm 0.23$ \\
YogiSignum                            & Yogi+$\beta_2$-decay & \xmark & \xmark     & \checkmark & \checkmark & $\beta_{1}$ & $-285.58 \pm 41.11$   & $3.99 \pm 2.01$  & $68.38 \pm 0.00$ & $50.92 \pm 0.00$ \\
Signum                                & \xmark & \xmark & \xmark     & \checkmark & \checkmark & $\beta_{1}$ & $9393.64 \pm 1399.78$ & $65.11 \pm 0.90$ & $86.42 \pm 0.72$ & $90.41 \pm 0.30$ \\
SignumGrad                            & \xmark & \checkmark & \xmark & \checkmark & \checkmark & $\beta_{1}$ & -- & $53.93 \pm 0.68$ & $68.38 \pm 0.00$ & $53.33 \pm 2.62$ \\
AdamGrad                              & Adam & \checkmark & \xmark & \checkmark & \checkmark & $\beta_{1}$ & $9855.19 \pm 1173.19$ & $70.30 \pm 0.38$ & $86.96 \pm 0.85$ & $92.71 \pm 0.45$ \\
AnoWoTweak                            & Yogi & \checkmark & \xmark & \checkmark & \checkmark & $\beta_{1}$ & $9053.10 \pm 792.13$  & $\textbf{70.32} \pm \textbf{1.20}$ & $\textbf{87.06} \pm \textbf{0.69}$ & $92.80 \pm 0.45$ \\
\textbf{Ano}                          & Yogi+$\beta_2$-decay & \checkmark & \xmark & \checkmark & \checkmark & $\beta_{1}$ & $\mathbf{10520.00 \pm 416.07}$ & $69.74 \pm 0.45$ & $86.76 \pm 0.63$ & $92.52 \pm 0.31$ \\
\addlinespace
\textit{Anolog ablation} \\
Anoall                                & Yogi+$\beta_2$-decay & \checkmark & \xmark & \checkmark & \checkmark & $1-1/k$ & $-221.45 \pm 22.25$  & $29.48 \pm 2.40$ & $68.38 \pm 0.00$ & $52.22 \pm 1.88$ \\
Anosqrt                               & Yogi+$\beta_2$-decay & \checkmark & \xmark & \checkmark & \checkmark & $1-1/\sqrt{k}$ & $8750 \pm 860.50$  & $\textbf{67.26} \pm \textbf{0.41}$ & $\textbf{86.18} \pm \textbf{1.08}$ & $91.74 \pm 0.53$ \\
\textbf{Anolog}                       & Yogi+$\beta_2$-decay & \checkmark & \xmark & \checkmark & \checkmark & $1-1/\log k$ & $\mathbf{9472.73 \pm 968.26}$ & $67.00 \pm 0.80$ & $85.25 \pm 1.79$ & $\textbf{92.78} \pm \textbf{0.16}$ \\
\bottomrule
\end{tabular}}
\caption{Ablation of our proposed optimizer(\textbf{Ano}) and its extension (\textbf{Anolog}).  Columns on the left indicate which components are active; columns on the right report mean performance $\pm$ 95\% CI}
\label{tab:ablation-study-ano}
\end{table}

As shown in Table~\ref{tab:ablation-study-ano}, \textbf{Ano} achieves the highest mean return in deep reinforcement learning, improving by roughly 7\% over the same algorithm with Adam-style second moments and about 15\% over Ano with Yogi-style second moments, while staying within 1\% of the best accuracy on all supervised learning tasks. Using only the sign of the momentum (e.g., Signum, AdamGrad, AnoWoDecay) also improves DRL performance, supporting our design choice to decouple sign and magnitude: this enables larger update steps, which are particularly beneficial in noisy or non-stationary environments. Performance drops when either gradient normalization (SignumGrad) or gradient magnitude (YogiSignum) is removed, underscoring their complementary roles. For momentum schedules, the logarithmic schedule improves DRL return over the $\sqrt{k}$ schedule while staying within the 95\% confidence interval on other tasks, motivating its inclusion in the final design.

\section{Limitations and Discussion}
\label{sec:limits-discussion}
Through our design and empirical analysis of Ano, we identified three main limitations:
First, the choice of $\beta_2$-decay appears particularly beneficial in reinforcement learning or highly non-stationary loss landscapes, where older gradients can be misleading and a rapid adaptation is crucial. However, in more stationary settings, such as classical supervised learning in CV and NLP, we observed that the variance estimate in vanilla Yogi often leads to more stable and effective training. Our focus on noisy, non-stationary environments motivates this design choice, though we acknowledge that its relevance to more conventional settings remains an open question.
Second, by construction, Ano favors larger step sizes to improve reactivity. While this design is advantageous in non-stationary contexts, it can also introduce instability. For example, our experiments with Nesterov-style acceleration, inspired by Adan, amplified rather than mitigated this issue.
Third, our experiments on classical CV and NLP tasks remain limited in scale, as Ano was primarily designed for highly non-stationary and noisy environments. In more stationary settings with longer training horizons, we observed that Adam can sometimes achieve better stability due to its smaller update steps. While these results suggest that Ano’s benefits are not restricted to DRL, assessing its relevance to large-scale CV or NLP tasks lies beyond the current scope and is left for future work.

\section{Conclusion}
We introduced \textbf{Ano}, an alternative to momentum-based adaptive optimizers that decouples direction and magnitude to improve robustness in noisy and non-stationary settings. Under standard smoothness and bounded-noise assumptions, we derive non-asymptotic guarantees comparable to existing analyses of sign-based methods (e.g., Signum, Lion) under similar decay schedules. Empirically, Ano achieves notable improvements in reinforcement learning and noisy NLP tasks while remaining competitive on low-noise benchmarks. Future work will focus on developing variance estimators tailored to supervised learning, integrating Nesterov-style look-ahead, and enhancing stability in long, stationary training regimes.\footnote{LLMs (GPT, Gemini) were used for minor text editing, LaTeX formatting, informal feedback on early drafts, and retrieval of related work and references; all research ideas, analyses, and conclusions remain solely those of the authors.}

\section{Reproducibility Statement}

All datasets used in this work are publicly available. The full source code, including training, preprocessing, and result visualization scripts, as well as all experiment logs, is released in an anonymous repository\footnote{https://anonymous.4open.science/r/ano-optimizer-1645/README.md}. The optimizer is also available as a pip package (PyTorch, TensorFlow, JAX) to facilitate implementation, but it isn't include in the source code for double bind review. Data preprocessing details, hyperparameter grids, and training protocols are described in Section~\ref{sec:experiments} and provided in the source code. All experiments were run with fixed random seeds on a workstation with an RTX 5090 GPU and an Intel Core Ultra 9 CPU using CUDA~12.9 and PyTorch~2.9.0.


\bibliography{iclr2025_conference}

@inproceedings{kingma2015adam,
  title     = {Adam: A Method for Stochastic Optimization},
  author    = {Kingma, Diederik P. and Ba, Jimmy},
  booktitle = {International Conference on Learning Representations},
  year      = {2015},
}

@article{xie2024adan,
  title={Adan: Adaptive nesterov momentum algorithm for faster optimizing deep models},
  author={Xie, Xingyu and Zhou, Pan and Li, Huan and Lin, Zhouchen and Yan, Shuicheng},
  journal={IEEE Transactions on Pattern Analysis and Machine Intelligence},
  volume={46},
  number={12},
  pages={9508--9520},
  year={2024}
}

@article{chen2023symbolic,
  title={Symbolic discovery of optimization algorithms},
  author={Chen, Xiangning and Liang, Chen and Huang, Da and Real, Esteban and Wang, Kaiyuan and Pham, Hieu and Dong, Xuanyi and Luong, Thang and Hsieh, Cho-Jui and Lu, Yifeng and others},
  journal={Advances in neural information processing systems},
  volume={36},
  pages={49205--49233},
  year={2023}
}

@misc{tieleman2012rmsprop,
  title={Lecture 6.5-RMSprop: Divide the gradient by a running average of its recent magnitude},
  author={Tieleman, Tijmen and Hinton, Geoffrey},
  year         = {2012},
  howpublished = {Coursera: Neural Networks for Machine Learning},
  note         = {Lecture 6.5, slide 29}
}

@inproceedings{mnih2016asynchronous,
  title={Asynchronous methods for deep reinforcement learning},
  author={Mnih, Volodymyr and Badia, Adria Puigdomenech and Mirza, Mehdi and Graves, Alex and Lillicrap, Timothy and Harley, Tim and Silver, David and Kavukcuoglu, Koray},
  booktitle={International conference on machine learning},
  pages={1928--1937},
  year={2016}
}

@inproceedings{balles2018dissecting,
  title={Dissecting adam: The sign, magnitude and variance of stochastic gradients},
  author={Balles, Lukas and Hennig, Philipp},
  booktitle={International Conference on Machine Learning},
  pages={404--413},
  year={2018}
}

@inproceedings{he2016deep,
  title={Deep residual learning for image recognition},
  author={He, Kaiming and Zhang, Xiangyu and Ren, Shaoqing and Sun, Jian},
  booktitle={Proceedings of the IEEE conference on computer vision and pattern recognition},
  pages={770--778},
  year={2016}
}

@inproceedings{Zaheer2018Adaptive,
  title     = {Adaptive Methods for Nonconvex Optimization},
  author    = {Zaheer, Manzil and Reddi, Sashank~J. and Sachan, Devendra and Kale, Satyen and Kumar, Sanjiv},
  booktitle = {Advances in Neural Information Processing Systems 31},
  pages     = {9793--9803},
  year      = {2018},
}

@inproceedings{bernstein2018signsgd,
  title={signSGD: Compressed optimisation for non-convex problems},
  author={Bernstein, Jeremy and Wang, Yu-Xiang and Azizzadenesheli, Kamyar and Anandkumar, Animashree},
  booktitle={International conference on machine learning},
  pages={560--569},
  year={2018}
}

@inproceedings{zagoruyko2017wideresidualnetworks,
    author={Sergey Zagoruyko and Nikos Komodakis},
    title={Wide Residual Networks}, 
    booktitle = {British Machine Vision Conference},
    year = {2016}
}

@article{franccois2018introduction,
  title={An introduction to deep reinforcement learning},
  author={Fran{\c{c}}ois-Lavet, Vincent and Henderson, Peter and Islam, Riashat and Bellemare, Marc G and Pineau, Joelle and others},
  journal={Foundations and Trends{\textregistered} in Machine Learning},
  volume={11},
  number={3-4},
  pages={219--354},
  year={2018}
}

@inproceedings{todorov2012mujoco,
  title={Mujoco: A physics engine for model-based control},
  author={Todorov, Emanuel and Erez, Tom and Tassa, Yuval},
  booktitle={2012 IEEE/RSJ international conference on intelligent robots and systems},
  pages={5026--5033},
  year={2012}
}

@article{towers2024gymnasiumstandardinterfacereinforcement,
  title={Gymnasium: A standard interface for reinforcement learning environments},
  author={Towers, Mark and Kwiatkowski, Ariel and Terry, Jordan and Balis, John U and De Cola, Gianluca and Deleu, Tristan and Goul{\~a}o, Manuel and Kallinteris, Andreas and Krimmel, Markus and KG, Arjun and others},
  journal={arXiv preprint arXiv:2407.17032},
  year={2024}
}

@inproceedings{devlin2019bert,
  title={Bert: Pre-training of deep bidirectional transformers for language understanding},
  author={Devlin, Jacob and Chang, Ming-Wei and Lee, Kenton and Toutanova, Kristina},
  booktitle={Proceedings of the 2019 conference of the North American chapter of the association for computational linguistics: human language technologies, volume 1 (long and short papers)},
  pages={4171--4186},
  year={2019}
}

@inproceedings{
wang2019gluemultitaskbenchmarkanalysis,
title={{GLUE}: A Multi-Task Benchmark and Analysis Platform for Natural Language Understanding},
author={Alex Wang and Amanpreet Singh and Julian Michael and Felix Hill and Omer Levy and Samuel R. Bowman},
booktitle={International Conference on Learning Representations},
year={2019}
}

@inproceedings{mosbach2021on,
title={On the Stability of Fine-tuning BERT: Misconceptions, Explanations, and Strong Baselines},
author={Marius Mosbach and Maksym Andriushchenko and Dietrich Klakow},
booktitle={International Conference on Learning Representations},
year={2021}
}

@techreport{krizhevsky2009learning,
  author={Krizhevsky Alex},
    title = {Learning multiple layers of features from tiny images},
    institution = {Toronto, ON, Canada},
    year = {2009}
}

@inproceedings{haarnoja2018soft,
  title={Soft actor-critic: Off-policy maximum entropy deep reinforcement learning with a stochastic actor},
  author={Haarnoja, Tuomas and Zhou, Aurick and Abbeel, Pieter and Levine, Sergey},
  booktitle={International conference on machine learning},
  pages={1861--1870},
  year={2018}
}

@inproceedings{
reddi2018convergence,
title={On the Convergence of Adam and Beyond},
author={Sashank J. Reddi and Satyen Kale and Sanjiv Kumar},
booktitle={International Conference on Learning Representations},
year={2018}
}

@article{cao2024grams,
  title={Grams: Gradient descent with adaptive momentum scaling},
  author={Cao, Yang and Li, Xiaoyu and Song, Zhao},
  journal={arXiv preprint arXiv:2412.17107},
  year={2024}
}

@article{dong2024convergence,
  title={Convergence rate analysis of lion},
  author={Dong, Yiming and Li, Huan and Lin, Zhouchen},
  journal={arXiv preprint arXiv:2411.07724},
  year={2024}
}

@inproceedings{aitchison2023atari,
  title={Atari-5: Distilling the arcade learning environment down to five games},
  author={Aitchison, Matthew and Sweetser, Penny and Hutter, Marcus},
  booktitle={International Conference on Machine Learning},
  pages={421--438},
  year={2023}
}

@article{bellemare2013arcade,
  title={The arcade learning environment: An evaluation platform for general agents},
  author={Bellemare, Marc G and Naddaf, Yavar and Veness, Joel and Bowling, Michael},
  journal={Journal of artificial intelligence research},
  volume={47},
  pages={253--279},
  year={2013}
}

@article{schulman2017proximalpolicyoptimizationalgorithms,
  title={Proximal policy optimization algorithms},
  author={Schulman, John and Wolski, Filip and Dhariwal, Prafulla and Radford, Alec and Klimov, Oleg},
  journal={arXiv preprint arXiv:1707.06347},
  year={2017}
}

@article{huang2022cleanrl,
  title={Cleanrl: High-quality single-file implementations of deep reinforcement learning algorithms},
  author={Shengyi Huang and Rousslan Fernand Julien Dossa and Chang Ye and Jeff Braga},
  journal={Journal of Machine Learning Research},
  volume={23},
  number={274},
  pages={1--18},
  year={2022}
}

@inproceedings{weng2022envpoolhighlyparallelreinforcement,
  title={Envpool: A highly parallel reinforcement learning environment execution engine},
  author={Weng, Jiayi and Lin, Min and Huang, Shengyi and Liu, Bo and Makoviichuk, Denys and Makoviychuk, Viktor and Liu, Zichen and Song, Yufan and Luo, Ting and Jiang, Yukun and others},
  booktitle={Advances in Neural Information Processing Systems},
  volume={35},
  pages={22409--22421},
  year={2022}
}

@article{machado2018revisiting,
  title={Revisiting the arcade learning environment: Evaluation protocols and open problems for general agents},
  author={Machado, Marlos C and Bellemare, Marc G and Talvitie, Erik and Veness, Joel and Hausknecht, Matthew and Bowling, Michael},
  journal={Journal of Artificial Intelligence Research},
  volume={61},
  pages={523--562},
  year={2018}
}

@article{duchi2011adaptive,
  title={Adaptive subgradient methods for online learning and stochastic optimization.},
  author={Duchi, John and Hazan, Elad and Singer, Yoram},
  journal={Journal of machine learning research},
  volume={12},
  number={7},
  year={2011}
}

@article{zeiler2012adadeltaadaptivelearningrate,
  title={ADADELTA: An Adaptive Learning Rate Method}, 
  author={Matthew D. Zeiler},
  journal={arXiv preprint arXiv:1212.5701},
  year={2012}
}

@article{mandt2017stochastic,
  title={Stochastic gradient descent as approximate bayesian inference},
  author={Mandt, Stephan and Hoffman, Matthew D and Blei, David M},
  journal={Journal of Machine Learning Research},
  volume={18},
  number={134},
  pages={1--35},
  year={2017}
}

@article{song2022learning,
  title={Learning from noisy labels with deep neural networks: A survey},
  author={Song, Hwanjun and Kim, Minseok and Park, Dongmin and Shin, Yooju and Lee, Jae-Gil},
  journal={IEEE transactions on neural networks and learning systems},
  volume={34},
  number={11},
  pages={8135--8153},
  year={2022}
}

@inproceedings{zhang2017understanding,
  title     = {Understanding Deep Learning Requires Rethinking Generalization},
  author    = {Zhang, Chiyuan and Bengio, Samy and Hardt, Moritz and Recht, Benjamin and Vinyals, Oriol},
  booktitle = {International Conference on Learning Representations},
  year      = {2017}
}

@inproceedings{henderson2018matters,
  title     = {Deep Reinforcement Learning that Matters},
  author    = {Henderson, Peter and Islam, Riashat and Bachman, Philip and Pineau, Joelle and Precup, Doina and Meger, David},
  booktitle = {Proceedings of the AAAI Conference on Artificial Intelligence (AAAI)},
  volume    = {32},
  year      = {2018}
}

@article{lan2025learning,
    title={Learning to Optimize for Reinforcement Learning},
    author={Lan, Qingfeng and Mahmood, A. Rupam and YAN, Shuicheng and Xu, Zhongwen},
    journal={Reinforcement Learning Journal},
    volume={2},
    pages={481--497},
    year={2025}
}

@inproceedings{
loshchilov2018decoupled,
title={Decoupled Weight Decay Regularization},
author={Ilya Loshchilov and Frank Hutter},
booktitle={International Conference on Learning Representations},
year={2019}
}

@article{lyle2024normalization,
  title={Normalization and effective learning rates in reinforcement learning},
  author={Lyle, Clare and Zheng, Zeyu and Khetarpal, Khimya and Martens, James and van Hasselt, Hado P and Pascanu, Razvan and Dabney, Will},
  journal={Advances in Neural Information Processing Systems},
  volume={37},
  pages={106440--106473},
  year={2024}
}
\bibliographystyle{iclr2025_conference}

\appendix

\section{Anolog pseudo code}
\label{app:anolog-pseudocode}

\begin{algorithm}[H]
\caption{Anolog}
\Input{Initial parameters $x_1 \in \mathbb{R}^d$, learning rate $\eta_k$, decay rate $\beta_2 \in [0,1)$, $\epsilon > 0$}
\BlankLine
Initialize $m_0 = 0$, $v_0 = 0$ \\
\For{$k = 1$ \KwTo $K$}{
Compute gradient $g_k = \nabla \ell(x_k)$ \\
$\beta_1 = 1-\frac{1}{\log(k+2)}$\\
$m_k = \beta_1 m_{k-1} + (1 - \beta_1) g_k$ \\
$v_k = \beta_2v_{k-1} - (1 - \beta_2) \cdot \text{sign}(v_{k-1}-g_k^2) \cdot g_k^2$ \\
$\hat{v_k} = \frac{v_k}{1-\beta_2^k}$\\
$x_{k+1} = x_k - \frac{\eta_k}{\sqrt{\hat{v_k}} + \epsilon} \cdot |g_k| \cdot \text{sign}(m_k) - \eta_k\lambda x_k$
}
\label{alg:anolog}
\end{algorithm}

\section{Hyperparameter Tuning Protocol}
\label{app:tuning_experiments}

To ensure a fair comparison, we conducted exhaustive grid searches over learning rates and momentum parameters on lightweight proxy tasks representative of each domain, using 5 independent seeds for each hyperparameter combination.
In total, the campaign involved \textbf{2115} independent training runs: roughly 35 hours per optimizer when searching across three hyperparameters, and about 12 hours for optimizers with only two (e.g., Anolog, RMSprop). For Adan, which introduces a third momentum coefficient, we maintained a uniform computational budget by varying only $(\beta_{1}, \beta_{3})$, that controlling the first and second moment estimates, while fixing the Nesterov term at its default value, $\beta_{2}=0.92$. For Anolog and RMSprop, we tuned only the learning rate and the variance term decay parameter ($\beta_2$ for Anolog and $\alpha$ for RMSprop). We then selected the configuration achieving the highest validation accuracy per seed, with final hyperparameters for all optimizers summarized in Table~\ref{tab:optimizers-hyperparameters}.

\paragraph{Computer-vision proxy.}
From CIFAR-10 \citep{krizhevsky2009learning}, we drew a balanced subset of $10{,}000$ training images and $2{,}000$ test images.  
We applied the identical augmentation pipeline used in Section~\ref{sec:experiments} (random cropping, horizontal flips, and Cutout).  
Each hyperparameter configuration was trained for 20 epochs under five independent seeds with a ResNet-18 backbone\citep{he2016deep}.

\begin{figure}[H]
    \centering
    \includegraphics[width=0.75\textwidth]{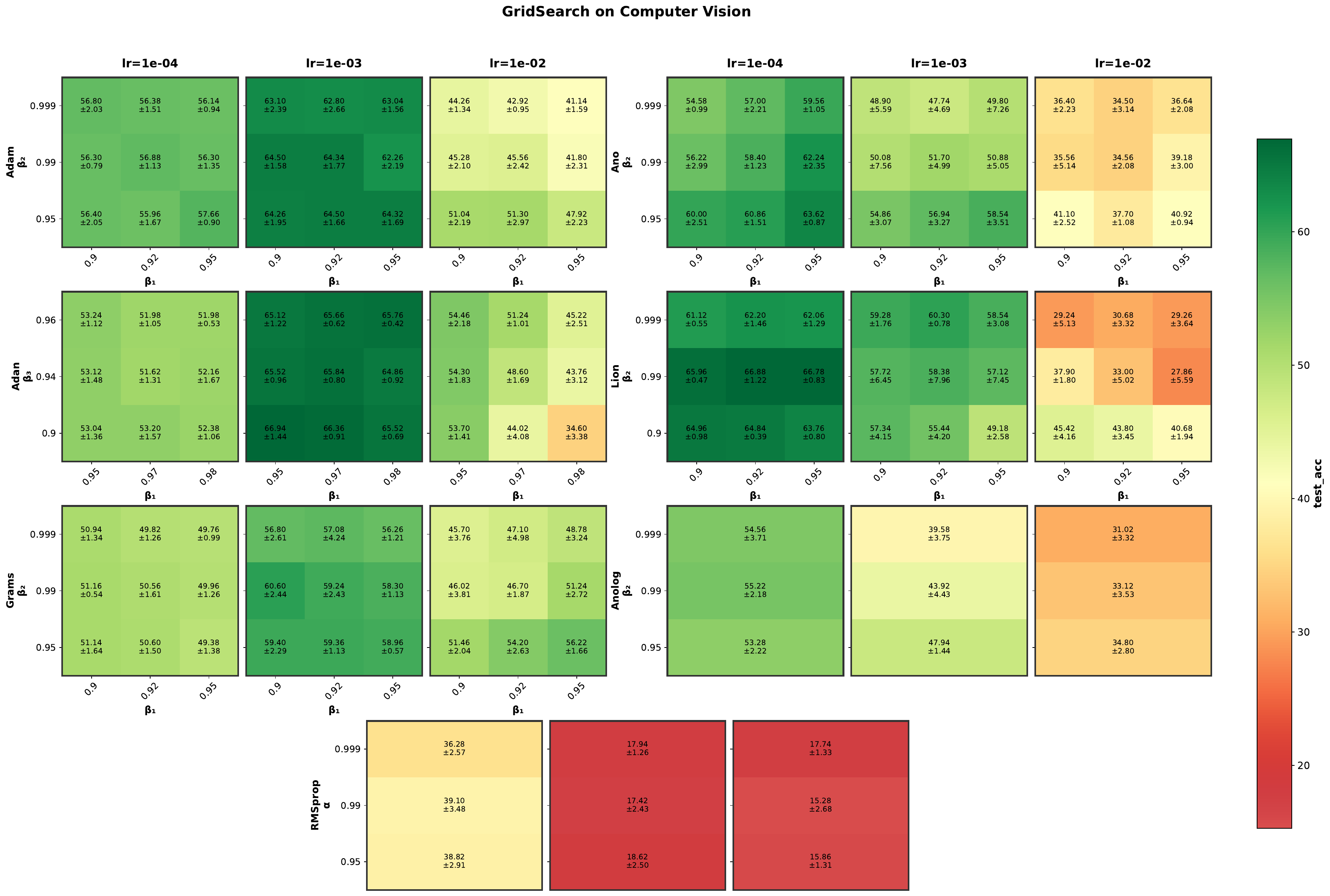}
    \caption{Grid search on the CIFAR-10 proxy (ResNet-18) for the optimizers.}
    \label{fig:cifar10_proxy}
\end{figure}

\paragraph{NLP proxy.}
We fine-tune Bert \citep{devlin2019bert} on the MRPC benchmark\citep{wang2019gluemultitaskbenchmarkanalysis}.
Although MRPC is relatively small and noisy, this characteristic amplifies the impact of optimizer hyperparameters, making it easier to reveal differences in optimization behavior that may be less pronounced on larger, more stable datasets. At the same time, its modest size keeps the experiments computationally efficient while preserving representative fine-tuning dynamics of GLUE tasks. A preliminary sweep indicated that learning rates outside the range $[1!\times!10^{-5},,7!\times!10^{-5}]$ consistently led to poor accuracy; subsequent grids therefore focus on a narrower range centered at $2!\times!10^{-5}$. 

\begin{figure}[H]
    \centering
    \includegraphics[width=0.75\textwidth]{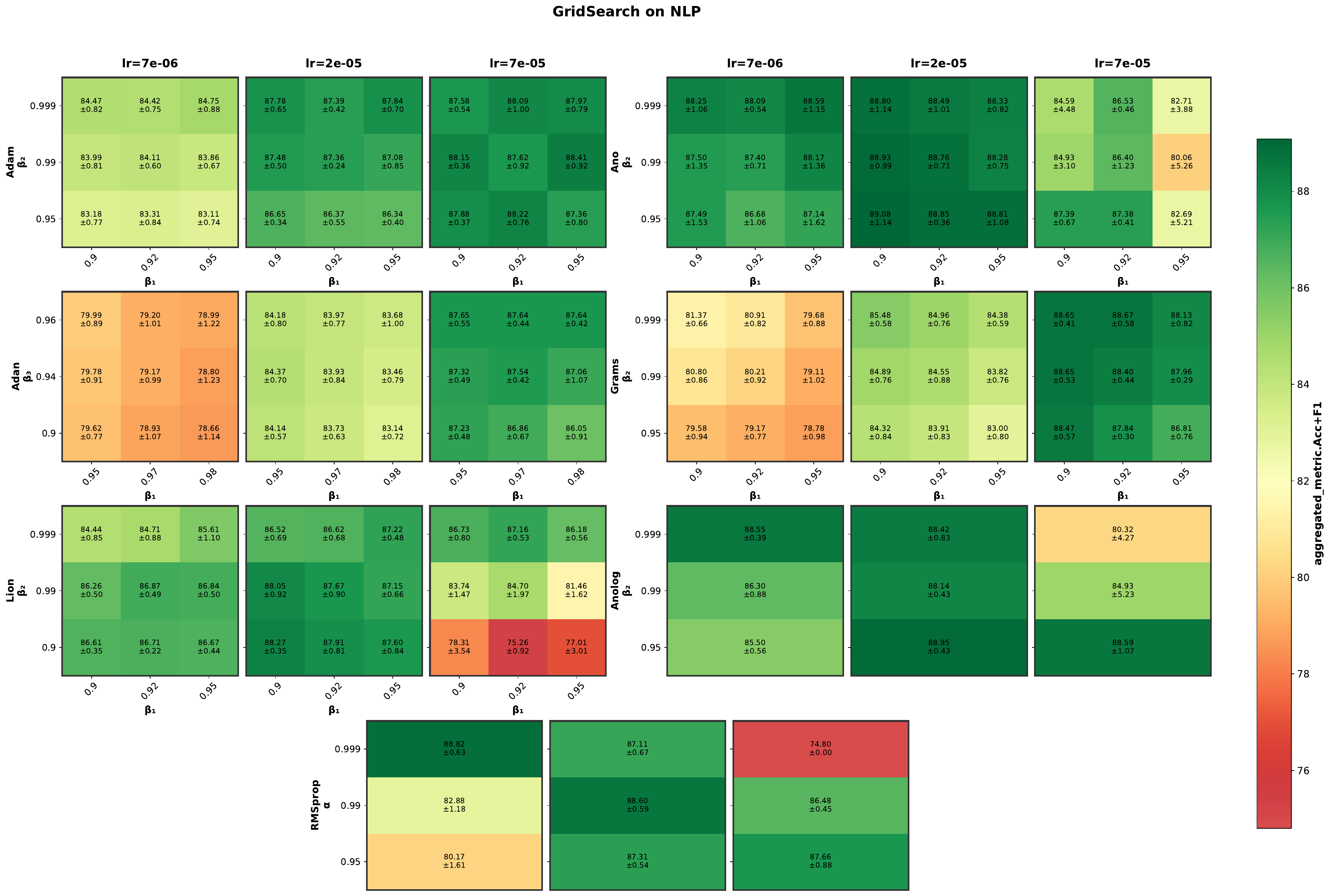}
    \caption{Grid-search results on MRPC used as an NLP proxy.}
    \label{fig:mrpc_proxy}
\end{figure}

\paragraph{Deep-RL proxy.}
For Deep Reinforcement Learning, we train a SAC agent on the MuJoCo \textit{HalfCheetah-v5} environment for 100k steps, given time constraints\citep{todorov2012mujoco, haarnoja2018soft}. This setup is primarily intended to reveal the impact of different hyperparameters, especially the momentum coefficients $\beta$, though we note that the shorter horizon may favor more aggressive learning rates. To address this limitation, the main text reports the best performance between default and tuned hyperparameters for each optimizer.

\begin{figure}[H]
    \centering
    \includegraphics[width=0.75\textwidth]{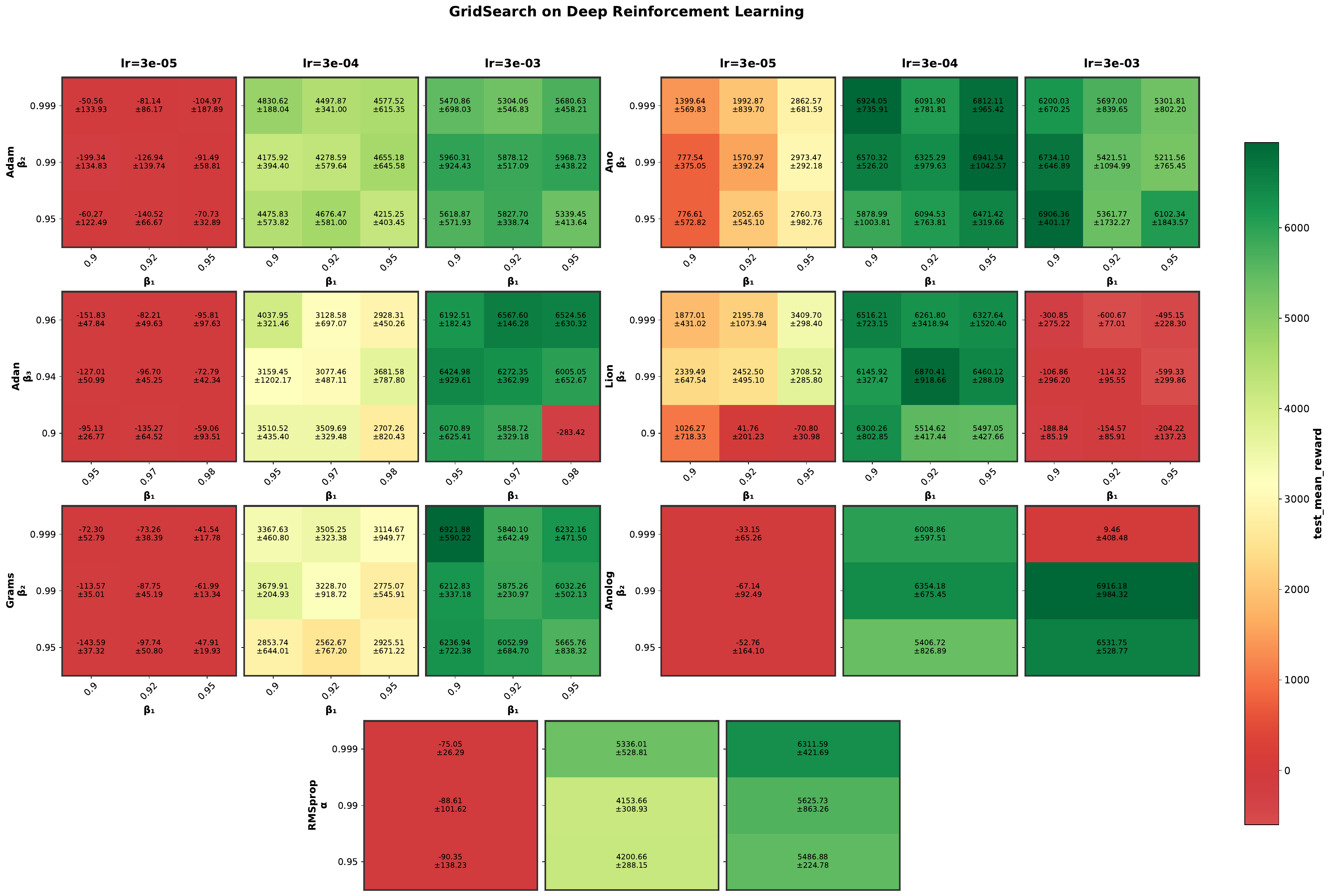}
    \caption{Grid-search on HalfCheetah 100k-steps training used as a DRL proxy.}
    \label{fig:halfcheetah_proxy}
\end{figure}

\section{Convergence Proof for \textsc{Ano}}\label{app:convergence_proof}

\subsection{Algorithmic update}

For each coordinate $i\in[d]$ the optimizer maintains a first-order momentum
$m_{k,i}$ and a second moment $v_{k,i}$ updated as below:
\begin{equation}\label{eq:yogi-updates}
m_{k,i} = \beta_{1,k} m_{k-1,i} + (1-\beta_{1,k}) g_{k,i},\qquad
v_{k,i} = \beta_2 v_{k-1,i}
          - (1-\beta_{2})\,\operatorname{sign}\!\bigl(v_{k-1,i}-g_{k,i}^{2}\bigr)\,g_{k,i}^{2},
\end{equation}
with $\beta_{1,k}=1-\frac{1}{\sqrt{k+1}}$ for \(k\ge1\); $\beta_{2}\in[0.5,1)$ and $m_{0,i}=v_{0,i}=0$.
The parameter vector is then updated by
\begin{equation}\label{eq:ano-step}
x_{k+1,i}
  = x_{k,i}
    - \frac{\eta_{k}}{\sqrt{v_{k-1,i}}+\varepsilon}\,
      |g_{k,i}|\,\operatorname{sign}(m_{k,i}),
\end{equation}

where $\varepsilon>0$ is a fixed constant.
\begin{equation}\label{eq:eta-schedule}
\eta_{k} = \frac{\eta}{(k+2)^{3/4}},\qquad k = 0,1,\dotsc
\end{equation}
We write $\E_{k-1}[\cdot] := \E[\cdot \mid \mathcal{F}_{k-1}]$ for the conditional expectation given the filtration $\mathcal{F}_{k-1}$.

\subsection{Standing assumptions}
\label{ass:all}
\begin{assumption}[Smoothness]\label{ass:smoothness}
The objective function $f : \mathbb{R}^d \to \mathbb{R}$ is differentiable and $L$-smooth; that is, for all $x, y \in \mathbb{R}^d$,
\[
\|\nabla f(x) - \nabla f(y)\| \leq L \|x - y\|.
\]
\end{assumption}

\begin{assumption}[Lower boundedness]\label{ass:lower-bounded}
The function $f$ is bounded from below: there exists $f^\star > -\infty$ such that $f(x) \geq f^\star$ for all $x \in \mathbb{R}^d$.
\end{assumption}

\begin{assumption}[Bounded gradients]\label{ass:bounded-grad}
There exists a constant $G > 0$ such that $|\nabla_i f(x_k)| \le G$ for all iterates $x_k$ and all coordinates $i$.
\end{assumption}

\begin{assumption}[Unbiased stochastic gradients]\label{ass:unbiased}
At each iteration $k$, we observe a stochastic gradient $g_k \in \mathbb{R}^d$ satisfying $\E[g_{k,i} \mid \mathcal{F}_{k-1}] = \nabla_i f(x_k)$ for all $i \in [d]$.
\end{assumption}

\begin{assumption}[Bounded variance]\label{ass:bounded-var}
There exists $\sigma > 0$ such that for all $i \in [d]$ and $k \ge 1$,
\[
\E[(g_{k,i} - \nabla_i f(x_k))^2 \mid \mathcal{F}_{k-1}] \leq \sigma^2.
\]
\end{assumption}

\subsection{Preliminary Lemma}
\paragraph{Local assumption.} 
For this lemma only, we impose an additional assumption solely to simplify the analysis and to obtain a pointwise bound on $v_{k,i}$. 
Specifically, we assume $g_{k,i} \leq \tilde{G}$ for all $k,i$. 
This assumption is not used anywhere else in the paper and plays no role in the convergence results.

\begin{lemma}[Bounds on $v_k$]\label{lem:vk-bounds}
Fix any coordinate \( i \in [d] \) and assume \( v_0 = 0 \) and \( \beta_2 \in [\tfrac{1}{2},1) \).  
Then for every \( k \ge 0 \),
\[
0 \;\le\; v_{k,i} \;\le\; \tilde{G}^2.
\]
\end{lemma}

\begin{proof}
The update is
\[
v_{k,i} = \beta_2 v_{k-1,i}
- (1-\beta_2)\,\sign(v_{k-1,i}-g_{k,i}^{2})\,g_{k,i}^{2}.
\]

\paragraph{Upper bound.}  
If \( \sign(v_{k-1,i}-g_{k,i}^2)=-1 \), then  
\[
v_{k,i} = \beta_2 v_{k-1,i} + (1-\beta_2)g_{k,i}^2
\le \tilde{G}^2
\]
If \( \sign(v_{k-1,i}-g_{k,i}^2)=1 \), then  
\[
v_{k,i} = \beta_2 v_{k-1,i} - (1-\beta_2)g_{k,i}^2 
\le \beta_2 v_{k-1,i} \le v_{k-1,i}.
\]
Starting from \( v_0=0 \), induction gives \( v_{k,i}\le G^2 \) for all \( k \).

\paragraph{Lower bound.}  
If \( \sign(v_{k-1,i}-g_{k,i}^2)=-1 \), then \( v_{k,i} \ge 0 \) since it is a convex combination of nonnegative terms.  
If \( \sign(v_{k-1,i}-g_{k,i}^2)=1 \), then
\[
v_{k,i} = \beta_2 v_{k-1,i} - (1-\beta_2)g_{k,i}^2
\ge (2\beta_2-1)g_{k,i}^2 \;\ge\; 0
\]
because \( \beta_2 \ge \tfrac12 \) and \( g_{k,i}^2 \ge 0 \).  
Thus \( v_{k,i} \ge 0 \) for all \( k \).
\end{proof}

\subsection{Auxiliary quantities}

Define the two per-iteration scalars
\begin{equation}\label{eq:AkBk-def}
A_{k} := \sum_{i=1}^{d}
         \frac{\nabla_{i}f(x_{k})\,|g_{k,i}|\,\operatorname{sign}(m_{k,i})}
              {\sqrt{v_{k-1,i}}+\varepsilon},
\qquad
B_{k} := \frac{L}{2}\sum_{i=1}^{d}
         \frac{g_{k,i}^{2}}{(\sqrt{v_{k-1,i}}+\varepsilon)^{2}}.
\end{equation}
These two terms govern the decrease of the objective:
\begin{equation}\label{eq:smooth-descent}
f(x_{k+1}) \le f(x_{k}) - \eta_{k} A_{k} + \eta_{k}^{2} B_{k}.
\end{equation}

\subsection{Lemma for Probability Sign-Mismatch}

\begin{lemma}[Sign-Mismatch Probability for Ano]\label{lem:sign-lemma}
Fix any coordinate $i\in[d]$, under Assumptions \ref{ass:smoothness}–\ref{ass:bounded-var} and following update rules \eqref{eq:ano-step}-\eqref{eq:yogi-updates}, with $\beta_{1,k} = 1 - \frac{1}{\sqrt{k+1}}$ and $\eta_k = (k+2)^{-3/4}$.
Then for every $k\ge1$,
\[
\mathbb{P}(\sign(m_{k,i}) \neq \sign(\nabla_i f(x_k)) \leq \frac{C_m^2}{|\nabla_i f(x_k)|^2\sqrt{k+1}}
\]
with \(C_m \coloneqq \sqrt{2(C_{\Delta}^2+\sigma^2)}\) and \(C_{\Delta}^2 = \frac{L^2d(\sigma^2+G^2)}{\varepsilon^2}\)
\end{lemma}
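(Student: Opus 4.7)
I would reduce the sign-mismatch event to a second-moment bound on the momentum error $e_{k,i} := m_{k,i} - \nabla_i f(x_k)$ and then solve a recursion calibrated to the schedule $\beta_{1,k}=1-1/\sqrt{k+1}$.

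\textbf{Step 1 (Markov-type reduction).} If $\operatorname{sign}(m_{k,i}) \neq \operatorname{sign}(\nabla_i f(x_k))$ the two scalars lie on opposite sides of the origin, so $|e_{k,i}| \geq |\nabla_i f(x_k)|$. The Markov inequality applied to $e_{k,i}^2$ then yields
\[
\mathbb{P}\!\left(\operatorname{sign}(m_{k,i}) \neq \operatorname{sign}(\nabla_i f(x_k))\right) \;\le\; \frac{\mathbb{E}[e_{k,i}^2]}{|\nabla_i f(x_k)|^2},
\]
reducing the lemma to proving $\mathbb{E}[e_{k,i}^2] \leq C_m^2/\sqrt{k+1}$.

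\textbf{Step 2 (Error decomposition and drift bound).} Inserting the Yogi momentum update into $e_{k,i}$ gives
\[
e_{k,i} \;=\; \beta_{1,k}\, e_{k-1,i} \;+\; \beta_{1,k}\, \Delta_{k,i} \;+\; (1-\beta_{1,k})\, \xi_{k,i},
\]
where the drift is $\Delta_{k,i} := \nabla_i f(x_{k-1}) - \nabla_i f(x_k)$ and the centred noise is $\xi_{k,i} := g_{k,i} - \nabla_i f(x_k)$. Since $\xi_{k,i}$ is conditionally mean-zero with variance $\le \sigma^{2}$ (Assumption~\ref{ass:bounded-var}) while $e_{k-1,i}$ and $\Delta_{k,i}$ are $\mathcal{F}_{k-1}$-measurable, squaring and taking expectations eliminates the cross-term in $\xi_{k,i}$. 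I would then combine $L$-smoothness with the coordinate-wise step bound $|x_{k,i}-x_{k-1,i}| \leq \eta_{k-1}|g_{k-1,i}|/\varepsilon$, together with $\mathbb{E}\|g_{k-1}\|^{2} \leq d(G^{2}+\sigma^{2})$ (Assumptions~\ref{ass:bounded-grad}, \ref{ass:bounded-var}), to derive $\mathbb{E}[\Delta_{k,i}^{2}] \leq \eta_{k-1}^{2} C_{\Delta}^{2}$, matching the constant named in the lemma.

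\textbf{Step 3 (Closing the induction).} I would split the cross-term by weighted AM--GM, $(e_{k-1,i}+\Delta_{k,i})^{2} \leq (1+1/\sqrt{k+1})\, e_{k-1,i}^{2} + (1+\sqrt{k+1})\,\Delta_{k,i}^{2}$. Since $\beta_{1,k}^{2}(1+1/\sqrt{k+1}) \leq 1-1/\sqrt{k+1}$, the recursion collapses to
\[
\mathbb{E}[e_{k,i}^{2}] \;\le\; \Bigl(1-\tfrac{1}{\sqrt{k+1}}\Bigr)\,\mathbb{E}[e_{k-1,i}^{2}] \;+\; \frac{2\eta^{2}C_{\Delta}^{2}+\sigma^{2}}{k+1}.
\]
An induction with hypothesis $\mathbb{E}[e_{k-1,i}^{2}] \leq C_{m}^{2}/\sqrt{k}$ then closes: the telescoping drop $C_{m}^{2}/\sqrt{k+1}-(1-1/\sqrt{k+1})C_{m}^{2}/\sqrt{k}$ is of order $C_{m}^{2}/\sqrt{k(k+1)} \sim 1/k$, absorbing the forcing term thanks to the slack $C_{m}^{2}-(2C_{\Delta}^{2}+\sigma^{2}) = \sigma^{2} \geq 0$.

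\textbf{Main obstacle.} The delicate point is calibrating the AM--GM weight so that $\beta_{1,k}^{2}(1+\alpha)$ absorbs exactly the factor $1-1/\sqrt{k+1}$ while the drift term $\beta_{1,k}^{2}(1+1/\alpha)\eta_{k-1}^{2}C_{\Delta}^{2}$ remains within the $1/(k+1)$ budget. This forced coupling is precisely why the $\eta_{k}=O(k^{-3/4})$ and $\beta_{1,k}=1-1/\sqrt{k+1}$ schedules must appear together; any slower step-size decay inflates the drift past the budget and breaks the rate. Verifying the base case and keeping $\eta$ small enough for the constant $C_{m}^{2}$ to close the recursion uniformly in $k$ is the only remaining bookkeeping.
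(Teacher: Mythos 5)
Your proposal follows essentially the same route as the paper's proof: the same reduction of the mismatch event to $|m_{k,i}-\nabla_i f(x_k)|\ge|\nabla_i f(x_k)|$ followed by Markov/Chebyshev, the same three-term error decomposition with the conditionally centred noise, the same drift bound $\mathbb{E}[\Delta_{k,i}^2]\le C_\Delta^2\eta_{k-1}^2$ via smoothness and the $\varepsilon$-floored step, the same Young split with weight $1/\sqrt{k+1}$, and the same induction giving $\mathbb{E}[e_{k,i}^2]=O(1/\sqrt{k+1})$ with constant $2(C_\Delta^2+\sigma^2)$. The only caveat you flag (the induction closing only for $k$ beyond some $k_0$, handled by choosing the constant large enough at the base case) is present in the paper's own argument as well, so your sketch is faithful to it.
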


\begin{proof}
We fix a coordinate \( i \in [d] \). Define the per-coordinate momentum error as
\[
e_{k,i} := m_{k,i} - \nabla_i f(x_k).
\]
Subtracting $\nabla_if(x_k)$ from the update rule of $m_{k,i}$ yields:
\begin{align*}
e_{k,i} &= \beta_{1,k} m_{k-1,i} + (1 - \beta_{1,k}) g_{k,i} - \nabla_i f(x_k) \\
&= \beta_{1,k} (m_{k-1,i} - \nabla_i f(x_{k-1})) + \beta_{1,k} (\nabla_i f(x_{k-1}) - \nabla_i f(x_k)) + (1 - \beta_{1,k})(g_{k,i} - \nabla_i f(x_k))
\end{align*}
Define \( \Delta_{k,i} := \nabla_i f(x_{k-1}) - \nabla_i f(x_k) \) as the gradient variation, and \( \xi_{k,i} := g_{k,i} - \nabla_i f(x_k) \) as the stochastic noise for coordinate \( i \).
\[
e_{k,i}= \beta_{1,k} e_{k-1,i} + \beta_{1,k} \Delta_{k,i} + (1 - \beta_{1,k})\xi_{k,i},
\]
Conditionally on \(\mathcal F_{k-1}\), define \( V_{k,i} := \E_{k-1}[e_{k,i}^2] \) :
\[
V_{k,i} = \E_{k-1}\left[ \left( \beta_{1,k} e_{k-1,i} + \beta_{1,k} \Delta_{k,i} + (1 - \beta_{1,k})\xi_{k,i} \right)^2 \right].
\]
Since $\E_{k-1}[\xi_{k,i}] = 0$ (Ass. \ref{ass:unbiased}), all mixed terms involving $\xi_{k,i}$ vanish after taking $\E_{k,i}$. Thus,
\[
V_{k,i} = \E_{k-1}\left[ \left( \beta_{1,k} e_{k-1,i} + \beta_{1,k} \Delta_{k,i} \right)^2 \right] + (1 - \beta_{1,k})^2 \E_{k-1}[\xi_{k,i}^2].
\]

We now apply Young's inequality for scalars: \( (a + b)^2 \leq (1 + \delta) a^2 + (1 + 1/\delta) b^2 \) for any \( \delta > 0 \). We set \( \delta = \frac{1}{\sqrt{k+1}}\) (this minimizes the resulting upper bound). Applying this gives:
\[
\E_{k-1}\left[ \left( \beta_{1,k} e_{k-1,i} + \beta_{1,k} \Delta_{k,i} \right)^2 \right] \leq \left(1 + \frac{1}{\sqrt{k+1}}\right)\beta_{1,k}^2 \E_{k-1}[e_{k-1,i}^2] + (1 +\sqrt{k+1})\beta_{1,k}^2 \E_{k-1}[\Delta_{k,i}^2].
\]
Therefore,
\[
V_{k,i} \leq \left(1 + \frac{1}{\sqrt{k+1}}\right)\beta_{1,k}^2 V_{k-1,i} + (1 + \sqrt{k+1})\beta_{1,k}^2 \E_{k-1}[\Delta_{k,i}^2] + (1 - \beta_{1,k})^2 \sigma^2.
\]

To bound \( \E_{k-1}[\Delta_{k,i}^2] \), we use the L-smoothness of \( f \) (Assumption \ref{ass:smoothness}). We have:
\[
|\Delta_{k,i}| = |\nabla_i f(x_{k-1}) - \nabla_i f(x_k)| \leq \|\nabla f(x_{k-1}) - \nabla f(x_k)\|_2 \leq L \|x_k - x_{k-1}\|_2.
\]
Hence,
\[
\E_{k-1}[\Delta_{k,i}^2] \leq L^2 \E_{k-1}[\|x_k - x_{k-1}\|_2^2].
\]

Now we bound the step size:
\begin{align*}
\E_{k-1}[\|x_k - x_{k-1}\|_2^2] &= \E_{k-1}\left[ \sum_{j=1}^d \left( \frac{\eta_{k-1}}{\sqrt{v_{k-1,j}} + \varepsilon} |g_{k-1,j}| \right)^2 \right] \\
&\leq \eta_{k-1}^2 \sum_{j=1}^d \E_{k-1}\left[ \frac{g_{k-1,j}^2}{\varepsilon^2} \right] \\
&\leq \frac{\eta_{k-1}^2}{\varepsilon^2} \sum_{j=1}^d \E_{k-1}[g_{k-1,j}^2].
\end{align*}
By Assumptions \ref{ass:bounded-grad}-\ref{ass:bounded-var}, we have \( \E_{k-1}[g_{k-1,j}^2] \leq \sigma^2 + G^2 \), where \( \sigma^2 \) is the variance bound and \( G \) is an upper bound on the gradient norm. Thus,
\[
\E_{k-1}[\|x_k - x_{k-1}\|_2^2] \leq \frac{d \eta_{k-1}^2 (\sigma^2 + G^2)}{\varepsilon^2}.
\]
Let \( C_\Delta^2 := \frac{L^2 d (\sigma^2 + G^2)}{\varepsilon^2} \). Then, we obtain:
\[
\E_{k-1}[\Delta_{k,i}^2] \leq C_\Delta^2 \eta_{k-1}^2.
\]

Putting everything together, we get the recurrence:
\[
V_{k,i} \leq \left(1 + \frac{1}{\sqrt{k+1}}\right)\beta_{1,k}^2 V_{k-1,i} + (1 + \sqrt{k+1})\beta_{1,k}^2 C_\Delta^2 \eta_{k-1}^2 + (1 - \beta_{1,k})^2 \sigma^2.
\]
Set $\beta_{1, k} = 1 - \frac{1}{\sqrt{k+1}}$ and $\eta_k = 1/(k+1)^{3/4}$, to simplify we denote $l = k+1$.
\begin{align*}
    V_{k,i} &\leq (1 + \frac{1}{\sqrt{l}})(1 - \frac{1}{\sqrt{l}})^2 V_{k-1,i} + (1 + \sqrt{l})(1 - \frac{1}{\sqrt{l}})^2 C_\Delta^2 \frac{1}{l^{3/2}} + \frac{\sigma^2}{k+1}\\
    V_{k,i} &\leq (1-\frac{1}{\sqrt{l}}-\frac{1}{l}+\frac{1}{l^{3/2}}) V_{k-1,i} + \frac{1}{l} \left[ (1 + \frac{1}{\sqrt{l}})(1 - \frac{1}{\sqrt{l}})^2 C_\Delta^2 + \sigma^2 \right]\\
\end{align*}

Hence \(a_k \coloneqq (1-\frac{1}{\sqrt{l}}-\frac{1}{l}+\frac{1}{l^{3/2}})\) and \( B_k \coloneqq \frac{1}{l} \left[ (1 + \frac{1}{\sqrt{l}})(1 - \frac{1}{\sqrt{l}})^2 C_\Delta^2 + \sigma^2 \right]\), with $a_k \le 1 - \frac{1}{\sqrt{l}}$ and $B_k = \mathcal{O}(l^{-1})$.So, we can express the inequality on this way 
\[
V_{k,i} \leq a_k V_{k-1,i} + B_k
\]

Simplifying the coefficients.
For $k\ge1$,
\[
  a_k
  \;=\;
  1-\frac1{\sqrt{l}}-\frac1{l}+\frac1{l^{3/2}}
  \;\;\le\;\;
  1-\frac1{\sqrt{l}},
  \qquad
  (1+l^{-1/2})(1-l^{-1/2})^2
  \;\le\;
  1,
\]
so that
\begin{equation}
  a_k \le 1-\frac1{\sqrt{l}},
  \qquad
  B_k \le \frac{C_B}{l},
  \quad
  C_B := C_\Delta^{2}+\sigma^{2}.
\end{equation}

We want to simplify this bound by having a bound with the form $V_{k,i} \le \mathcal{O}\left( \frac{1}{\sqrt{l}}\right)$, to proceed, we will prove by induction that $V_{k,i} = O(1/\sqrt{l})$. Specifically, we posit that there exists a constant $M$ such that for all sufficiently large $k$, $V_{k,i} \leq \frac{M}{\sqrt{l}}$.

\paragraph{Base Case} We can choose $M$ large enough such that the hypothesis holds for some initial $k_0 \geq 1$.

\paragraph{Inductive Step} Assume that for some $k > k_0$, the hypothesis $V_{k-1} \leq \frac{M}{\sqrt{k-1}}$ holds. We must show that $V_{k,i} \leq \frac{M}{\sqrt{l}}$. From our simplified recurrence, we have:
\[
V_{k,i} \leq \left(1 - \frac{1}{\sqrt{l}}\right) \frac{M}{\sqrt{l-1}} + \frac{C_B}{l}
\]
The induction holds if we can prove:
\[
\left(1 - \frac{1}{\sqrt{l}}\right) \frac{M}{\sqrt{l-1}} + \frac{C_B}{l} \leq \frac{M}{\sqrt{l}}
\]
Rearranging the terms, this is equivalent to showing:
\[
\frac{C_B}{l} \leq M \left( \frac{1}{\sqrt{l}} - \frac{1}{\sqrt{l-1}} + \frac{1}{\sqrt{l}\sqrt{l-1}} \right)
\]
To analyze the right-hand side (RHS) for large $k$, we use a Taylor expansion for the term $(l-1)^{-1/2}$:
\[
\frac{1}{\sqrt{l-1}} = (l-1)^{-1/2} = l^{-1/2} \left(1 - \frac{1}{l}\right)^{-1/2}
\]
Using the expansion $(1-x)^{-1/2} = 1 + \frac{x}{2} + O(x^2)$, we get:
\[
\frac{1}{\sqrt{l-1}} = \frac{1}{\sqrt{l}} \left(1 + \frac{1}{2l} + O\left(\frac{1}{l^2}\right)\right) = \frac{1}{\sqrt{l}} + \frac{1}{2l^{3/2}} + O\left(\frac{1}{l^{5/2}}\right)
\]
Substituting this into the parenthesis on the RHS of our inequality, the term becomes:
\begin{align*}
    & \frac{1}{\sqrt{l}} - \left(\frac{1}{\sqrt{l}} + \frac{1}{2l^{3/2}}\right) + \frac{1}{l\sqrt{1-1/l}} + O\left(\frac{1}{l^{5/2}}\right) \\
    = & -\frac{1}{2l^{3/2}} + \frac{1}{l}\left(1 + \frac{1}{2l} + O\left(\frac{1}{l^2}\right)\right) + O\left(\frac{1}{l^{5/2}}\right) \\
    = & \frac{1}{l} - \frac{1}{2l^{3/2}} + O\left(\frac{1}{l^2}\right)
\end{align*}
The full inequality we need to satisfy is therefore:
\[
\frac{C_B}{l} \leq M \left( \frac{1}{l} - \frac{1}{2l^{3/2}} + O\left(\frac{1}{l^2}\right) \right)
\]
Multiplying through by $l$, the condition becomes:
\[
C_B \leq M \left( 1 - \frac{1}{2\sqrt{l}} + O\left(\frac{1}{l}\right) \right)
\]
This inequality shows why the induction works. For any choice of constant $M > C_B$, we can find a sufficiently large $k_0$ such that for all $k \geq k_0$, the inequality holds. This completes the induction, establishing that $V_{k,i} \leq \frac{M}{\sqrt{l}}$. So, for a sufficiently large $k_0$ such that for all $k \geq k_0$, we have :
\begin{align*}
    V_{k,i} &\leq \frac{2C_B}{\sqrt{l}}\\
    \E_{k-1}[(m_{k,i} - \nabla_i f(x_k))^2] &\leq \frac{2(C_{\Delta}^2+\sigma^2)}{\sqrt{l}}\\
    \E_{k-1}[|m_{k,i} - \nabla_i f(x_k)|^2] &\leq \frac{2(C_{\Delta}^2+\sigma^2)}{\sqrt{l}}\\
 \end{align*}
We recall that $l=k+1$, so 
\[
\E_{k-1}[|m_{k,i} - \nabla_i f(x_k)|^2] \leq \frac{2(C_{\Delta}^2+\sigma^2)}{\sqrt{k+1}}\\
\]
\textbf{From moment bound to probability bound}
We bound the probability of a momentum--gradient sign mismatch. If $\sign(m_{k,i}) \neq \sign(\nabla_i f(x_k))$ and $\nabla_i f(x_k)\neq 0$, then $|m_{k,i}-\nabla_if(x_k)| \geq |\nabla_i f(x_k)|$. Hence, for any $k\ge1$,
\[
\mathbb{P}_{k-1}\big(\sign(m_{k,i}) \neq \sign(\nabla_i f(x_k))\big) \leq \mathbb{P}\big(|m_{k,i}-\nabla_if(x_k)| \geq |\nabla_i f(x_k)|\big)
\]
We apply Chebyshev's inequality to the right-hand side:
\[
\mathbb{P}_{k-1}\big(|m_{k,i}-\nabla_if(x_k)| \geq |\nabla_i f(x_k)|\big) \leq \frac{\mathbb{E}_{k-1}\big[|m_{k,i}-\nabla_if(x_k)|^2\big]}{|\nabla_i f(x_k)|^2}
\]
Using the previously established second-moment bound, $\mathbb{E}_{k-1}[|m_{k,i} - \nabla_i f(x_k)|^2] \leq \frac{2(C_{\Delta}^2+\sigma^2)}{\sqrt{k+1}}$ :
\[
\mathbb{P}_{k-1}\big(\sign(m_{k,i}) \neq \sign(\nabla_i f(x_k))\big) \leq \frac{C_m^2}{|\nabla_i f(x_k)|^2\sqrt{k+1}}
\]
with \( C_m = \sqrt{2(C_{\Delta}^2 + \sigma^2)}\)
\end{proof}

\subsection{Bound on A}
\begin{lemma}[Lower bound on the expected update magnitude]\label{lem:Ak-lower}
Assume all the condition (cf \ref{ass:all}.  
Recall
\[
A_k
\;=\;
\sum_{i=1}^{d}
\frac{\nabla_{i}f(x_{k})\,|g_{k,i}|\,\operatorname{sign}(m_{k,i})}
     {\sqrt{v_{k-1,i}}+\varepsilon},
\qquad
\E_{k-1}[\cdot]\;=\;\E[\cdot \mid \mathcal{F}_{k-1}].
\]
Let
\[
C_m = \sqrt{2(C_{\Delta}^2+\sigma^2)}, \quad C_v = \frac{2d\sqrt{\sigma^2+G^2}}{\varepsilon}
\]
Then, for every iteration \(k\ge k_0\),
\[
\E[A_k]
\geq
\frac{\E[\|\nabla f(x_k)\|^2_2]}{\tilde{G} + \varepsilon}
-
\frac{C_v C_m}{(k+1)^{1/4}},
\]
\end{lemma}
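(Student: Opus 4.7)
The plan is to split $A_k$ according to whether the momentum sign matches the true gradient sign, and then use Lemma~\ref{lem:sign-lemma} to control the contribution of the ``mismatch'' event. Let $E_{k,i}$ denote the event $\{\operatorname{sign}(m_{k,i})\neq\operatorname{sign}(\nabla_i f(x_k))\}$ and set $Z_{k,i}:=|g_{k,i}|/(\sqrt{v_{k-1,i}}+\varepsilon)$, which is non-negative. Since $\nabla_i f(x_k)\,\operatorname{sign}(m_{k,i}) = |\nabla_i f(x_k)|(1-2\mathbb{1}_{E_{k,i}})$, one can rewrite
\[
A_k \;=\; \sum_{i=1}^d |\nabla_i f(x_k)|\,Z_{k,i} \;-\; 2\sum_{i=1}^d |\nabla_i f(x_k)|\,Z_{k,i}\,\mathbb{1}_{E_{k,i}}.
\]
All of $x_k$ and $v_{k-1,i}$ are $\mathcal F_{k-1}$-measurable, so I will work under $\mathbb{E}_{k-1}[\cdot]$ and only take the outer expectation at the end.

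For the positive (``aligned'') term, I would apply Jensen's inequality coordinate-wise to get $\mathbb{E}_{k-1}[|g_{k,i}|]\ge |\mathbb{E}_{k-1}[g_{k,i}]|=|\nabla_i f(x_k)|$ (Assumption~\ref{ass:unbiased}). For the denominator, I need a uniform upper bound $\sqrt{v_{k-1,i}}+\varepsilon \le \sqrt 2\,G+\varepsilon$. This is the main technical obstacle: the Yogi recursion \eqref{eq:yogi-updates} is not an exponential moving average, so the standard EMA argument does not apply. I would establish this by an induction on $k$, using Assumptions~\ref{ass:bounded-grad}–\ref{ass:bounded-var} to control $g_{k,i}^2$ and exploiting the contractive branch of the Yogi update together with the fact that the expansive branch only triggers when $v_{k-1,i}<g_{k,i}^2$; this should yield $v_{k,i}\le 2G^2$ in the sense needed (possibly in expectation, which is still enough once we take the outer expectation). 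Combining these two ingredients gives
\[
\sum_{i=1}^d |\nabla_i f(x_k)|\,\mathbb{E}_{k-1}[Z_{k,i}] \;\ge\; \frac{\|\nabla f(x_k)\|_2^2}{\sqrt 2\,G+\varepsilon}.
\]

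For the negative (``mismatch'') term, I would use Cauchy–Schwarz:
\[
\mathbb{E}_{k-1}\bigl[Z_{k,i}\,\mathbb{1}_{E_{k,i}}\bigr]
\;\le\;\sqrt{\mathbb{E}_{k-1}[Z_{k,i}^2]}\;\sqrt{\mathbb{P}_{k-1}(E_{k,i})}.
\]
The second factor is bounded by Lemma~\ref{lem:sign-lemma} as $C_m/(|\nabla_i f(x_k)|(k+1)^{1/4})$. The first factor is bounded by $\sqrt{\mathbb{E}_{k-1}[g_{k,i}^2]}/\varepsilon \le \sqrt{G^2+\sigma^2}/\varepsilon$ thanks to the bounded-variance and bounded-gradient assumptions. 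The two $|\nabla_i f(x_k)|$ factors cancel, leaving a coordinate contribution of order $\sqrt{G^2+\sigma^2}\,C_m/(\varepsilon(k+1)^{1/4})$; summing over the $d$ coordinates and absorbing the factor $2$ produces exactly $C_v C_m/(k+1)^{1/4}$ with $C_v=2d\sqrt{\sigma^2+G^2}/\varepsilon$.

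Putting the two bounds together gives the stated inequality after taking the outer expectation. I expect the hardest step to be the a priori bound on $\sqrt{v_{k-1,i}}$: because Yogi's sign-based correction differs from a pure EMA, the induction has to handle the expansive case carefully, and without $|g_{k,i}|\le G$ almost surely the bound likely holds only in expectation, which is enough for the final statement but must be carried through the decomposition consistently.
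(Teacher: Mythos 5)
Your plan matches the paper's proof essentially step for step: the same sign-match/mismatch decomposition of $A_k$, Jensen's inequality for the aligned term, and Cauchy--Schwarz combined with Lemma~\ref{lem:sign-lemma} for the mismatch term, with the $|\nabla_i f(x_k)|$ factors cancelling and the denominator bounded below by $\varepsilon$ to give exactly $C_v C_m/(k+1)^{1/4}$. The only real difference is the step you flag as the main obstacle: the paper does not prove $0\le v_{k-1,i}\le 2G^2$ by induction but simply imports it from Theorem~2 of \cite{Zaheer2018Adaptive}, while your explicit handling of $|\nabla_i f|$ versus $\operatorname{sign}(\nabla_i f)$ is, if anything, slightly cleaner than the paper's.
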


\begin{proof}
We begin by recalling the definition of \( A_k \) and factoring out constants that do not depend on \( g_{k,i} \):
\[
\E_{k-1}[A_k]
=
\sum_{i=1}^{d}
\frac{\nabla_{i}f(x_{k})}{\sqrt{v_{k-1,i}}+\varepsilon}
\cdot
\E_{k-1}\bigl[|g_{k,i}|\operatorname{sign}(m_{k,i})\bigr].
\]
Our goal is to lower bound the term \(\E_{k-1}\bigl[|g_{k,i}|\operatorname{sign}(m_{k,i})\bigr]\).

We first expand this term using the identity:
\[
\operatorname{sign}(m_{k,i}) = \operatorname{sign}(\nabla_i f(x_k)) \cdot \left(1 - 2 \cdot \mathbb{I}\left[\operatorname{sign}(\nabla_i f(x_k)) \neq \operatorname{sign}(m_{k,i})\right]\right),
\]
Let \( \chi_{k,i} := \mathbb{I}\left[\operatorname{sign}(\nabla_i f(x_k)) \neq \operatorname{sign}(m_{k,i})\right] \).
, we get:
\begin{align*}
\E_{k-1}\bigl[|g_{k,i}|\operatorname{sign}(m_{k,i})\bigr]
&= \operatorname{sign}(\nabla_i f(x_k)) \cdot \E_{k-1}\left[|g_{k,i}|\right]
\\
&\quad \quad- 2 \cdot \E_{k-1}\left[|g_{k,i}| \cdot \operatorname{sign}(\nabla_i f(x_k)) \cdot \chi_{k,i}\right]
\end{align*}
where the second line follows from linearity of expectation.

We now bound the first term using Jensen
\[
\operatorname{sign}(\nabla_i f(x_k)) \cdot \E_{k-1}[|g_{k,i}|]
\geq
\operatorname{sign}(\nabla_i f(x_k)) \cdot \left|\E_{k-1}[g_{k,i}]\right|
= \nabla_i f(x_k),
\]
where we used the assumption that \(\E_{k-1}[g_{k,i}] = \nabla_i f(x_k)\).

We bound the second term by combining \(\operatorname{sign}(\nabla_i f(x_k)) \leq 1\) and Cauchy-Schwarz:
\begin{align*}
    \E_{k-1}\left[|g_{k,i}| \cdot \operatorname{sign}(\nabla_i f(x_k))\cdot \chi_{k,i}\right] &\leq \sqrt{\E_{k-1}\left[g_{k,i}^2\right]} \cdot \sqrt{\E_{k-1}\left[\chi_{k,i}^2\right]}\\
\end{align*}
By the variance definition, we know that \( \operatorname{Var}(X) = \E[X^2] - (\E[X])^2 \), so we have
\[
 \E_{k-1}\left[g_{k,i}^2\right] = \operatorname{Var}_{k-1}(g_{k,i}) + \left(\E_{k-1}[g_{k,i}]\right)^2
\]
By combining Assumptions (\ref{ass:unbiased}, \ref{ass:bounded-grad} and \ref{ass:bounded-var}), and \( \chi_{k,i}^2 = \chi_{k,i}\), we got :
\[
\E_{k-1}\left[|g_{k,i}| \cdot \operatorname{sign}(\nabla_i f(x_k))\cdot \chi_{k,i}\right] \leq \sqrt{\sigma^2 + G^2} \cdot \sqrt{\mathbb{P}_{k-1}\left(\chi_{k,i}\right)}
\]
By using lemma \ref{lem:sign-lemma},
\[
\E_{k-1}\left[|g_{k,i}| \cdot \operatorname{sign}(\nabla_i f(x_k)) \cdot \chi_{k,i}\right] \leq \sqrt{\sigma^2 + G^2} \cdot \frac{C_m}{|\nabla_i f(x_k)|(k+1)^{1/4}}
\]
We have so :
\begin{align*}
    \E_{k-1}\bigl[|g_{k,i}|\operatorname{sign}(m_{k,i})\bigr]
&\geq \nabla_i f(x_k)- 2 \cdot \sqrt{\sigma^2 + G^2} \cdot \frac{C_m}{|\nabla_i f(x_k)|(k+1)^{1/4}}
\end{align*}
In our main equation, we have then
\[
\E_{k-1}[A_k]
\geq
\sum_{i=1}^{d}
\frac{\nabla_{i}f(x_{k})}{\sqrt{v_{k-1,i}}+\varepsilon}
\cdot \left(
\nabla_i f(x_k)- 2 \cdot \sqrt{\sigma^2 + G^2} \cdot \frac{C_m}{|\nabla_i f(x_k)|(k+1)^{1/4}} \right)
\]
\[
\E_{k-1}[A_k]
\geq
\sum_{i=1}^{d}\frac{(\nabla_i f(x_{k}))^2}{\sqrt{v_{k-1,i}}+\varepsilon}
- \frac{1}{(k+1)^{1/4}} \cdot 2\sqrt{\sigma^2 + G^2} \;C_m \sum_{i=1}^{d} \frac{1}{\sqrt{v_{k-1,i}}+\varepsilon}
\]

Using lemma \ref{lem:vk-bounds}, we got \( 0 \leq v_{k-1,i} \leq \tilde{G}^2 \), we deduce:
\[
\E_{k-1}[A_k]
\geq
\frac{\|\nabla f(x_{k})\|^2_2}{\tilde{G}+\varepsilon}
- \frac{1}{(k+1)^{1/4}} \cdot \frac{2d\sqrt{\sigma^2 + G^2} \;C_m}{\varepsilon}
\]
Finally, letting \( C_v = \frac{2d\sqrt{\sigma^2+G^2}}{\varepsilon} \), we can write:
\[
\E_{k-1}[A_k]
\geq
\frac{\|\nabla f(x_k)\|^2_2}{\tilde{G} + \varepsilon}
-
\frac{C_v C_m}{(k+1)^{1/4}},
\]

By the total expectation law 
\[
\E[A_k]
\geq
\frac{\E[\|\nabla f(x_k)\|^2_2]}{\tilde{G} + \varepsilon}
-
\frac{C_v C_m}{(k+1)^{1/4}},
\]

which concludes the proof.
\end{proof}

\subsection{Bound on B}
\begin{lemma}
\label{lem:Bk}
Assume the standing hypotheses of the paper hold, in particular (Assumptions \ref{ass:bounded-grad}-\ref{ass:bounded-var})
\(
\E_{k-1}[g_{k,i}^2]\le G^2+\sigma^2
\), for all time-steps \(k\) and coordinates \(i\),
and let \(\varepsilon>0\).
Then, for every iteration \(k \ge 1\),
\[
  \E[B_k]
  \le\;
  \frac{L\,d\,(G^2+\sigma^2)}
       {2\,\varepsilon^{2}}
\]
\end{lemma}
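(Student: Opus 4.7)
The proof is essentially a one-line bound combined with the standard stochastic-gradient second-moment assumption. My plan is as follows.

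First, I would use the coordinate-wise inequality $\sqrt{v_{k-1,i}}+\varepsilon \ge \varepsilon$, which holds as long as the Yogi iterate satisfies $v_{k-1,i}\ge 0$. This positivity should either be stated as a standing hypothesis (as in Zaheer et al.'s Yogi paper, where the initialization $v_{0,i}\ge g_{0,i}^2$ or an equivalent safeguard is imposed) or verified inductively; in either case I would briefly invoke it here. Squaring yields $(\sqrt{v_{k-1,i}}+\varepsilon)^{2}\ge \varepsilon^{2}$, hence
\[
  \frac{g_{k,i}^{2}}{(\sqrt{v_{k-1,i}}+\varepsilon)^{2}}
  \;\le\;
  \frac{g_{k,i}^{2}}{\varepsilon^{2}}.
\]

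Next, I would sum this bound over $i\in[d]$, multiply by $L/2$, and take the conditional expectation with respect to $\mathcal F_{k-1}$. Because $v_{k-1,i}$ is $\mathcal F_{k-1}$-measurable (the bound above is already deterministic given $\mathcal F_{k-1}$), only the $g_{k,i}^{2}$ factor carries the conditional expectation. Invoking Assumptions \ref{ass:bounded-grad} and \ref{ass:bounded-var}, which together give $\E_{k-1}[g_{k,i}^{2}] = \operatorname{Var}_{k-1}(g_{k,i}) + (\nabla_i f(x_k))^{2} \le \sigma^{2}+G^{2}$ (exactly as used in the proof of Lemma~\ref{lem:Ak-lower}), I obtain
\[
  \E_{k-1}[B_k]
  \;\le\;
  \frac{L}{2}\sum_{i=1}^{d}\frac{\sigma^{2}+G^{2}}{\varepsilon^{2}}
  \;=\;
  \frac{L\,d\,(G^{2}+\sigma^{2})}{2\,\varepsilon^{2}}.
\]

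Finally, applying the tower property $\E[B_k]=\E[\E_{k-1}[B_k]]$ promotes this into the desired unconditional bound. There is no real obstacle to speak of, since the argument is a direct lower bound on the denominator combined with the second-moment hypothesis on $g_{k,i}$; the only point requiring minor care is the non-negativity of $v_{k-1,i}$ under the Yogi update rule, which is standard but worth flagging explicitly.
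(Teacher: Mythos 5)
Your proposal is correct and follows essentially the same route as the paper's proof: lower-bounding the denominator by $\varepsilon^{2}$ using $v_{k-1,i}\ge 0$, applying the second-moment bound $\E_{k-1}[g_{k,i}^{2}]\le G^{2}+\sigma^{2}$, summing over coordinates, and concluding by the tower property. Your additional remark on verifying the non-negativity of the Yogi iterate $v_{k-1,i}$ is a reasonable point of extra care that the paper takes for granted, but it does not change the argument.
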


\begin{proof}
Because \(v_{k-1,i}\ge 0\) and \(\varepsilon>0\), we have
\[
  (\sqrt{v_{k-1,i}}+\varepsilon)^{2}
  \;\ge\;
  \varepsilon^{2}.
\]
Together with the bound \(
\E_{k-1}[g_{k,i}^2]\le G^2+\sigma^2
\), this implies
\[
  \E_{k-1}\left[\frac{g_{k,i}^{2}}
       {(\sqrt{v_{k-1,i}}+\varepsilon)^{2}}\right]
  \;\le\;
  \frac{G^{2}+\sigma^2}
       {\varepsilon^{2}}
  \quad
  \text{for all } i.
\]
Summing over \(i=1,\dots,d\) and factoring out \(L/2\) yields
\[
  \E_{k-1}[B_k]
  \;\le\;
  \frac{L}{2}\,d\,\frac{(G^{2}+\sigma^2)}{\varepsilon^{2}}
  =\,
  \frac{L\,d\,(G^{2}+\sigma^2)}{2\,\varepsilon^{2}},
\]
By the total expectation law
\[
  \E[B_k]
  \;\le\;
  \frac{L\,d\,(G^{2}+\sigma^2)}{2\,\varepsilon^{2}}
\]
which completes the proof.
\end{proof}
\subsection{Main result}
\begin{theorem}[Convergence to a stationary point]\label{thm:main}
Let Assumption~\ref{ass:all} hold and set the learning rate as in~\eqref{eq:eta-schedule}. Then for any horizon $K \ge 1$,
\[
\min_{0 \le k < K} \E\bigl[\|\nabla f(x_k)\|^2_2\bigr]
\le
 \mathcal{O}\!\left(\frac{\log K}{K^{1/4}}\right).
\]
\end{theorem}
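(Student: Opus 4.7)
The plan is to combine the $L$-smoothness descent inequality \eqref{eq:smooth-descent} with the two bounds on $A_k$ and $B_k$ already established in Lemmas~\ref{lem:Ak-lower}~and~\ref{lem:Bk}, then telescope over $k$ and exploit the chosen learning-rate schedule. First I would take the total expectation of \eqref{eq:smooth-descent} to get
\[
  \E[f(x_{k+1})] \;\le\; \E[f(x_k)] - \eta_k \,\E[A_k] + \eta_k^{2}\,\E[B_k],
\]
and substitute the lower bound $\E[A_k] \ge \frac{\E[\|\nabla f(x_k)\|^2]}{\sqrt{2}G+\varepsilon} - \frac{C_v C_m}{(k+1)^{1/4}}$ and the upper bound $\E[B_k] \le \frac{Ld(G^2+\sigma^2)}{2\varepsilon^2}=: C_B$. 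Setting $c := 1/(\sqrt{2}G+\varepsilon)$ and rearranging gives the one-step inequality
\[
  c\,\eta_k\,\E[\|\nabla f(x_k)\|^{2}]
  \;\le\;
  \E[f(x_k)] - \E[f(x_{k+1})]
  + \frac{\eta_k\,C_v C_m}{(k+1)^{1/4}}
  + \eta_k^{2}\,C_B .
\]

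Next I would sum this inequality from $k = k_0$ to $K-1$ (where $k_0$ is the threshold from Lemma~\ref{lem:sign-lemma}; the first $k_0$ terms are absorbed into an additive constant since $f\ge f^\star$ and all per-step quantities are bounded). The telescoping left side yields $\E[f(x_{k_0})]-\E[f(x_K)] \le f(x_{k_0}) - f^{\star}$, which is finite by Assumption~\ref{ass:lower-bounded}. For the residual sums I would use $\eta_k=\eta/(k+2)^{3/4}$ to get
\[
  \sum_{k=k_0}^{K-1} \frac{\eta_k}{(k+1)^{1/4}}
  \;=\;
  \mathcal{O}\!\left(\sum_{k=1}^{K} \frac{1}{k}\right)
  \;=\;
  \mathcal{O}(\log K),
  \qquad
  \sum_{k=k_0}^{K-1} \eta_k^{2}
  \;=\;
  \mathcal{O}\!\left(\sum_{k=1}^{K} k^{-3/2}\right)
  \;=\;
  \mathcal{O}(1),
\]
so the whole right-hand side is $\mathcal{O}(\log K)$.

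Finally, using $\min_{k<K}\E[\|\nabla f(x_k)\|^2]\cdot\sum_{k=k_0}^{K-1}\eta_k \le \sum_{k=k_0}^{K-1}\eta_k\,\E[\|\nabla f(x_k)\|^2]$ and the lower bound
\[
  \sum_{k=k_0}^{K-1}\eta_k
  \;=\;
  \Omega\!\left(\sum_{k=1}^{K} k^{-3/4}\right)
  \;=\;
  \Omega(K^{1/4}),
\]
dividing through by $c\sum_k\eta_k$ produces the announced bound $\min_{0\le k<K}\E[\|\nabla f(x_k)\|^{2}] \le \mathcal{O}(\log K\,/\,K^{1/4})$.

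\paragraph{Expected difficulty.}
The routine parts (descent inequality, integral estimates of the three power sums, and the $\min$--vs.--average trick) are standard. The main obstacle is handling the initial regime cleanly: Lemma~\ref{lem:sign-lemma}, and therefore the $A_k$ lower bound of Lemma~\ref{lem:Ak-lower}, only holds for $k\ge k_0$ with $k_0$ depending on unspecified constants ($M>C_B$ in the induction). I would argue that this is harmless because for $k<k_0$ the per-step objective decrease $\E[f(x_k)]-\E[f(x_{k+1})]$ is uniformly bounded by a constant depending only on $G$, $\sigma$, $L$, $d$, $\varepsilon$ and $\eta$, and these finitely many terms can be absorbed into the $\mathcal{O}(1)$ additive constant without altering the asymptotic $\log K/K^{1/4}$ rate. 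A secondary subtlety is that the bound on $\E[A_k]$ is expressed in terms of $\E[\|\nabla f(x_k)\|^{2}]$ rather than a pointwise lower bound: this is exactly what is needed to telescope cleanly once we multiply by $\eta_k$, so no Jensen-type manipulation is required at that stage.
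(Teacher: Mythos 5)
Your proposal is correct and follows essentially the same route as the paper's proof: take expectations in the smoothness descent inequality, substitute the bounds from Lemmas~\ref{lem:Ak-lower} and~\ref{lem:Bk}, telescope, estimate the three power-law sums (harmonic, $k^{-3/2}$, and $k^{-3/4}$), and apply the standard min-versus-weighted-average step. If anything you are slightly more careful than the paper itself, which sums from $k=0$ while Lemma~\ref{lem:Ak-lower} is only stated for $k\ge k_0$; your observation that the finitely many initial steps contribute only an $\mathcal{O}(1)$ additive term closes that small gap cleanly.
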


\begin{proof}
The proof starts from the descent guarantee provided by the $L$-smoothness of $f(x)$, as stated in Equation~\eqref{eq:smooth-descent}:
\[
f(x_{k+1}) \le f(x_k) - \eta_k A_k + \eta_k^2 B_k.
\]
Taking $\mathbb{E}_{k-1}$, we get:
\[
\E[f(x_{k+1})] \le \E[f(x_k)] - \eta_k \E[A_k] + \eta_k^2 \E[B_k].
\]

We now bound the terms $\E[A_k]$ and $\E[B_k]$ using the provided lemmas.

\begin{enumerate}
    \item \textbf{Bounding $\E[A_k]$}: From Lemma~\ref{lem:Ak-lower}, we have:
    \[
    \E[A_k] \ge \frac{1}{\sqrt{2}\,G+\varepsilon} \E[\|\nabla f(x_k)\|^2_2] - \frac{C_m C_v}{(k+1)^{\frac{1}{4}}}
    \]
    
    \item \textbf{Bounding $\E[B_k]$}: From Lemma~\ref{lem:Bk}, $B_k$ is uniformly bounded. Define $C_b := \frac{L\,d\,(G^{2}+\sigma^2)}{2\,\varepsilon^{2}}$, then:
    \[
    \E[B_k] \le C_b.
    \]
\end{enumerate}

Substituting into the main inequality:
\[
\E[f(x_{k+1})] \le \E[f(x_k)] - \frac{\eta_k}{\sqrt{2}\,G+\varepsilon} \E\bigl[\|\nabla f(x_k)\|^2_2\bigr] + \eta_k \frac{C_m C_v}{k^{1/4}} + \eta_k^2 C_b.
\]

Let $\eta_k = \frac{\eta}{(k+2)^{3/4}} \leq \frac{\eta}{(k+1)^{3/4}}$. Then:
\[
\E[f(x_{k+1})] \le \E[f(x_k)] - \frac{\eta}{(k+1)^{3/4}(\sqrt{2}\,G+\varepsilon)} \E\bigl[\|\nabla f(x_k)\|^2_2\bigr] +\frac{C_m C_v \eta}{k+1} + \frac{\eta^2 \; C_b}{(k+1)^{3/2}}
\]

Rewriting:
\[
\frac{\eta}{(k+1)^{3/4}(\sqrt{2}\,G+\varepsilon)} \E\bigl[\|\nabla f(x_k)\|^2_2\bigr] \le \E[f(x_k)] - \E[f(x_{k+1})] +\frac{C_m C_v \eta}{k+1} + \frac{\eta^2 \; C_b}{(k+1)^{3/2}}
\]

Summing from $k=0$ to $K-1$:
\[
\sum_{k=0}^{K-1}\frac{\eta}{(k+1)^{3/4}(\sqrt{2}\,G+\varepsilon)} \E\bigl[\|\nabla f(x_k)\|^2_2\bigr] \le f(x_0) - f^\star +C_m C_v \eta\sum_{k=0}^{K-1}\frac{1}{k+1} +\eta^2 \; C_b \sum_{k=0}^{K-1}\frac{1}{(k+1)^{3/2}}
\]

The harmonic sum satisfies:
\[
\sum_{k=0}^{K-1} \frac{1}{k+1} \le 1 + \log K.
\]

The second term is a convergent \(p\)-series with exponent \(p = 3/2 > 1\); hence it converges to a finite limit as \(K \to \infty\). Therefore, the partial sum can be bounded by the value of the full series:
\[
\eta^2\, C_b \sum_{k=0}^{K-1} (k+1)^{-3/2}
 \;\le\; \eta^2\, C_b \sum_{k=0}^{\infty} k^{-3/2}
 \;=\; \eta^2\, C_b \,\zeta\!\left(\tfrac{3}{2}\right)
 \;=: C_R,
\]
where \(\zeta(\cdot)\) denotes the Riemann zeta function. We thus define the constant \(C_R := \eta^2 C_b \zeta\!\left(\tfrac{3}{2}\right)\).

By combining these terms, we have for the main equation:
\[
\sum_{k=0}^{K-1}\frac{1}{(k+1)^{3/4}} \E\bigl[\|\nabla f(x_k)\|^2_2\bigr] \le \frac{1}{C_{LHS}}\left[f(x_0) - f^\star + C_{sum}(1 + \log K) + C_R \right]
\]
with $C_{sum} = C_m C_v \eta$ and $C_{LHS} = \frac{\eta}{\tilde{G}+\epsilon}$.

The left-hand side can be lower-bounded as follows:
\[
\sum_{k=1}^{K-1}\frac{1}{(k+1)^{3/4}} \E\bigl[\|\nabla f(x_k)\|^2_2\bigr] \ge \min_{0 \le k < K} \E\bigl[\|\nabla f(x_k)\|^2_2\bigr] \sum_{k=1}^{K-1}\frac{1}{(k+1)^{3/4}}
\]

Now, let's find a lower bound for the sum. We can approximate it with an integral:
\[
\sum_{k=0}^{K-1}\frac{1}{(k+1)^{3/4}} = \sum_{j=1}^{K}\frac{1}{j^{3/4}} \ge \int_{1}^{K+1} \frac{1}{x^{3/4}} dx = \left[4x^{1/4}\right]_{1}^{K+1} = 4\left((K+1)^{1/4} - 1\right)
\]
For a large $K$, this sum is of the order $\mathcal{O}(K^{1/4})$.

Substituting this back into our main inequality, we get:
\[
4\left((K+1)^{1/4} - 1\right) \min_{0 \le k < K} \E\bigl[\|\nabla f(x_k)\|^2_2\bigr] \le \frac{1}{C_{LHS}}\left[f(x_0) - f^\star + C_{sum}(1 + \log K) + C_R \right]
\]

Now, we can isolate the minimum of the expected squared norm of the gradient:
\[
\min_{0 \le k < K} \E\bigl[\|\nabla f(x_k)\|^2_2\bigr] \le \frac{f(x_0) - f^\star + C_{sum}(1 + \log K) + C_R}{4 C_{LHS} \left((K+1)^{1/4} - 1\right)}
\]

As $K \to \infty$, the dominant terms are $\log K$ in the numerator and $K^{1/4}$ in the denominator. Therefore, we can write the convergence rate as:
\[
\min_{0 \le k < K} \E\bigl[\|\nabla f(x_k)\|^2_2\bigr] = \mathcal{O}\left(\frac{\log K}{K^{1/4}}\right) = \mathcal{\tilde{O}}\left(\frac{1}{K^{1/4}}\right)
\]

\begin{remark}[On the choice of hyperparameters]
The convergence analysis required carefully tuned hyperparameters. In particular, we employed a progressively increasing momentum coefficient, $\beta_{1,k} = 1 - 1/\sqrt{k}$, along with a relatively aggressive learning rate schedule. This combination was essential to control key residual terms throughout the proof and to ensure overall convergence. While more conservative choices failed to yield meaningful bounds, the schedule used here proved sufficient for our theoretical guarantees.
\end{remark}

\begin{remark}[On the convergence rate]
The convergence rate $\mathcal{O}(\log K / K^{1/4})$ for $\min_k \E[\|\nabla f(x_k)\|^2]$ arises intrinsically from the sign-based nature of the update rule. Unlike gradient-magnitude-based methods, our approach relies solely on the sign of momentum terms, which demands high directional accuracy. Ensuring this accuracy requires a tight control over the momentum error variance, which in turn necessitates a fast-decaying learning rate, $\eta_k = \mathcal{O}(k^{-3/4})$. This schedule guarantees reliable update directions but slows the overall convergence, as the learning rate bounds the algorithm’s progress. Hence, the rate reflects a fundamental trade-off: stability of sign-based directions versus the speed of descent.
\end{remark}
\end{proof}

\section{Additional Results}
\label{app:additionnal Results}

\subsection{Full Results on Analysis Parts}
\begin{table}[h]
\centering
\begin{tabular}{lccccc}
\toprule
\textbf{Optimizer} & $\sigma\!=\!0$ & 0.01 & 0.05 & 0.10 & 0.20 \\
\midrule
Ano   & 82.10$\pm$0.20 & 78.71$\pm$0.20 & 70.88$\pm$0.34 & 65.93$\pm$0.33 & 59.54$\pm$0.66 \\
Adam  & 80.67$\pm$0.37 & 75.97$\pm$0.27 & 66.86$\pm$0.52 & 60.83$\pm$0.54 & 52.46$\pm$0.93 \\
Lion  & 81.04$\pm$0.30 & 77.80$\pm$0.19 & 69.62$\pm$0.28 & 64.02$\pm$0.77 & 56.82$\pm$0.78 \\
Grams & 71.34$\pm$0.33 & 77.90$\pm$0.02 & 70.57$\pm$0.23 & 65.47$\pm$0.32 & 58.80$\pm$0.57 \\
\bottomrule
\end{tabular}
\caption{CIFAR-10 test accuracy (\%) with 95\% confidence intervals.}
\label{tab:noise-robustness-detailed}
\end{table}

\begin{table}[h]
\centering
\resizebox{\textwidth}{!}{
\begin{tabular}{l|cc|cc|cc|cc|cc|c}
\toprule
\textbf{Optimizers} & \multicolumn{2}{c|}{HalfCheetah-v5} & \multicolumn{2}{c|}{Ant-v5} & \multicolumn{2}{c|}{Humanoid-v5} & \multicolumn{2}{c|}{Walker2d-v5} & \multicolumn{2}{c|}{Hopper-v5} & Avg. Norm \\
 & Score$\pm$IC & Norm & Score$\pm$IC & Norm & Score$\pm$IC & Norm & Score$\pm$IC & Norm & Score$\pm$IC & Norm & \\
\midrule
\textit{Default}\\
Adam        & $10549.48 \pm 721.55$ & 97.10 & $4336.64 \pm 698.72$ & 82.05 & $5357.14 \pm 211.97$ & 99.29 & $4462.51 \pm 588.77$ & 85.36 & $3164.71 \pm 600.48$ & 89.52 & 90.66 \\
RMSprop     & $10506.23 \pm 852.19$ & 96.71 & $4234.37 \pm 763.65$ & 80.11 & $5395.51 \pm 126.80$ & 100.00 & $4160.06 \pm 480.62$ & 79.57 & $2973.86 \pm 571.05$ & 84.12 & 88.10 \\
Adan        & $7805.20 \pm 1154.02$ & 71.84 & $2985.19 \pm 1018.79$ & 56.48 & $5080.74 \pm 305.26$ & 94.17 & $4092.13 \pm 379.92$ & 78.28 & $3222.62 \pm 235.25$ & 91.16 & 78.38 \\
Lion        & $9527.96 \pm 805.42$  & 87.70 & $4948.26 \pm 243.05$  & 93.62 & $98.22 \pm 32.33$    & 1.82  & $4612.63 \pm 367.77$ & 88.23 & $3087.27 \pm 628.06$ & 87.33 & 71.74 \\
Grams       & $6782.60 \pm 715.12$  & 62.43 & $3207.30 \pm 531.06$  & 60.68 & $5104.10 \pm 692.14$ & 94.60 & $3656.66 \pm 658.82$ & 69.95 & $1475.34 \pm 927.22$ & 41.73 & 65.88 \\
\textbf{Ano (Ours)}    & $10864.09 \pm 1052.24$ & 100.00 & $5285.44 \pm 729.86$ & 100.00 & $5255.62 \pm 815.92$ & 97.41 & $5227.86 \pm 436.49$ & 100.00 & $3535.32 \pm 780.96$ & 100.00 & 99.48 \\
\textbf{Anolog (Ours)} & $10557.05 \pm 560.70$  & 97.17 & $5089.12 \pm 522.94$  & 96.29 & $5242.78 \pm 173.98$ & 97.17 & $4606.02 \pm 478.36$ & 88.11 & $3314.12 \pm 539.95$ & 93.74 & 94.50 \\
\addlinespace
\textit{Tuned}\\
Adam        & $8243.01 \pm 2750.47$ & 69.70 & $5050.53 \pm 471.12$  & 90.82 & $5224.24 \pm 339.87$ & 100.00 & $4429.62 \pm 668.97$ & 90.83 & $2968.40 \pm 696.52$ & 82.62 & 86.79 \\
RMSprop     & $10096.62 \pm 2379.00$& 85.37 & $3509.99 \pm 827.41$  & 63.12 & $64.97 \pm 42.44$    & 1.24  & $4583.19 \pm 969.52$ & 93.98 & $2031.80 \pm 771.10$ & 56.55 & 60.05 \\
Adan        & $10822.40 \pm 475.75$ & 91.51 & $5239.69 \pm 270.96$  & 94.22 & $4792.62 \pm 904.44$ & 91.74 & $4686.83 \pm 502.28$ & 96.11 & $3514.42 \pm 143.57$ & 97.82 & 94.28 \\
Lion        & $10482.06 \pm 1018.86$& 88.63 & $4848.41 \pm 821.79$  & 87.18 & $1349.15 \pm 1322.56$& 25.82 & $4876.76 \pm 253.22$ & 100.00 & $3592.87 \pm 70.26$  & 100.00 & 80.33 \\
Grams       & $10533.70 \pm 866.69$ & 89.07 & $4607.59 \pm 505.08$  & 82.85 & $5147.04 \pm 487.55$ & 98.52 & $4644.45 \pm 498.08$ & 95.24 & $3147.82 \pm 605.03$ & 87.61 & 90.66 \\
\textbf{Ano (Ours)}    & $11826.22 \pm 700.46$  & 100.00 & $5561.17 \pm 400.26$  & 100.00 & $5158.30 \pm 313.97$ & 98.74 & $4804.34 \pm 359.02$ & 98.51 & $3226.05 \pm 504.58$ & 89.79 & 97.41 \\
\textbf{Anolog (Ours)} & $11198.28 \pm 771.94$  & 94.69 & $5095.64 \pm 722.28$  & 91.63 & $3137.03 \pm 1335.43$& 60.05 & $4563.86 \pm 834.12$ & 93.58 & $3321.91 \pm 472.93$ & 92.46 & 86.48 \\
\addlinespace
\textit{Best Version}\\
Adam        & $10549.48 \pm 721.55$ & 97.10 & $4336.64 \pm 698.72$ & 82.05 & $5357.14 \pm 211.97$ & 99.29 & $4462.51 \pm 588.77$ & 85.36 & $3164.71 \pm 600.48$ & 88.08 & 90.38 \\
RMSprop     & $10506.23 \pm 852.19$ & 96.71 & $4234.37 \pm 763.65$ & 80.11 & $5395.51 \pm 126.80$ & 100.00 & $4160.06 \pm 480.62$ & 79.57 & $2973.86 \pm 571.05$ & 82.77 & 87.83 \\
Adan        & $10822.40 \pm 475.75$ & 99.62 & $5239.69 \pm 270.96$ & 99.13 & $4792.62 \pm 904.44$ & 88.83 & $4686.83 \pm 502.28$ & 89.65 & $3514.42 \pm 143.57$ & 97.82 & 95.01 \\
Lion        & $10482.06 \pm 1018.86$& 96.48 & $4848.41 \pm 821.79$ & 91.73 & $1349.15 \pm 1322.56$& 25.01 & $4876.76 \pm 253.22$ & 93.28 & $3592.87 \pm 70.26$  & 100.00 & 81.30 \\
Grams       & $10533.70 \pm 866.69$ & 96.96 & $4607.59 \pm 505.08$ & 87.18 & $5147.04 \pm 487.55$ & 95.39 & $4644.45 \pm 498.08$ & 88.84 & $3147.82 \pm 605.03$ & 87.61 & 91.20 \\
\textbf{Ano (Ours)}    & $10864.09 \pm 1052.24$ & 100.00 & $5285.44 \pm 729.86$ & 100.00 & $5255.62 \pm 815.92$ & 97.41 & $5227.86 \pm 436.49$ & 97.41 & $3535.32 \pm 780.96$ & 98.40 & 99.16 \\
\textbf{Anolog (Ours)} & $10557.05 \pm 560.70$  & 97.17 & $5089.12 \pm 522.94$ & 96.29 & $5242.78 \pm 173.98$ & 97.17 & $4606.02 \pm 478.36$ & 97.17 & $3314.12 \pm 539.95$ & 92.24 & 94.20 \\
\bottomrule
\end{tabular}
}
\caption{Comparison of the average performance ($\pm$ CI95\%) and normalized scores of different optimizers across MuJoCo environments.}
\label{tab:drl-norm}
\end{table}

\subsection{Full Normalized Score for PPO}

\begin{table}[h]
\centering
\resizebox{\textwidth}{!}{
\begin{tabular}{l|cc|cc|cc|cc|cc|c}
\toprule
\textbf{Optimizers} & \multicolumn{2}{c|}{BattleZone-v5} & \multicolumn{2}{c|}{DoubleDunk-v5} & \multicolumn{2}{c|}{NameThisGame-v5} & \multicolumn{2}{c|}{Phoenix-v5} & \multicolumn{2}{c|}{Qbert-v5} & Avg. Norm \\
 & Score$\pm$IC & Norm & Score$\pm$IC & Norm & Score$\pm$IC & Norm & Score$\pm$IC & Norm & Score$\pm$IC & Norm & \\
\midrule
\textit{Default}\\
Adam   & $7615.00 \pm 1299.89$ & 94.07 & $-1.08 \pm 0.21$ & 97.62 & $665.35 \pm 64.89$  & 78.71 & $3443.30 \pm 240.68$ & 100    & $4257.80 \pm 135.90$ & 88.18 & 91.71 \\
RMSprop & $7686.67 \pm 859.43$  & 94.96 & $-0.67 \pm 0.22$ & 100    & $798.00 \pm 118.66$ & 94.40 & $3031.13 \pm 410.22$ & 88.03  & $4585.67 \pm 538.44$ & 94.96 & 94.47 \\
Adan   & $6480.00 \pm 702.14$   & 80.05 & $-0.91 \pm 0.23$ & 98.58  & $638.35 \pm 18.08$  & 75.51 & $2106.90 \pm 110.74$ & 61.19  & $2665.00 \pm 708.69$ & 55.19 & 74.11 \\
Lion   & $1392.00 \pm 139.66$   & 17.20 & $-1.46 \pm 0.65$ & 95.42  & $508.15 \pm 91.28$  & 60.11 & $2432.35 \pm 234.61$ & 70.64  & $3768.00 \pm 520.95$ & 78.03 & 64.28 \\
Grams  & $7908.00 \pm 687.61$   & 97.69 & $-0.93 \pm 0.24$ & 98.48  & $633.80 \pm 23.52$  & 74.97 & $2234.40 \pm 130.73$ & 64.89  & $3670.12 \pm 634.75$ & 76.00 & 82.41 \\
\textbf{Ano (Ours)}    & $8095.00 \pm 494.70$   & 100   & $-0.97 \pm 0.14$ & 98.26  & $845.35 \pm 56.39$  & 100   & $2813.00 \pm 243.04$ & 81.69  & $4828.80 \pm 386.73$ & 100   & 95.99 \\
\textbf{Anolog (Ours)} & $7485.00 \pm 1010.66$  & 92.46 & $-0.98 \pm 0.14$ & 98.19  & $751.05 \pm 74.60$  & 88.84 & $2983.00 \pm 236.42$ & 86.63  & $4773.75 \pm 602.39$ & 98.86 & 93.00 \\
\addlinespace
\textit{Tuned}\\
Adam   & $6430.00 \pm 864.51$   & 74.55 & $-0.98 \pm 0.20$ & 99.14  & $549.75 \pm 51.42$  & 66.39 & $406.90 \pm 105.28$ & 14.40  & $4486.62 \pm 683.06$ & 75.27 & 65.95 \\
RMSprop & $0.00 \pm 452.43$      & 0.00  & $-0.83 \pm 0.14$ & 100    & $47.50 \pm 96.74$   & 5.74  & $16.20 \pm 1.47$    & 0.57   & $72.50 \pm 54.93$   & 1.22  & 21.51 \\
Adan   & $4840.00 \pm 2601.35$  & 56.12 & $-0.95 \pm 0.23$ & 99.30  & $754.20 \pm 51.44$  & 91.08 & $2647.20 \pm 534.22$ & 93.71  & $4524.75 \pm 448.74$ & 75.91 & 83.22 \\
Lion   & $1324.00 \pm 218.94$   & 15.35 & $-2.38 \pm 1.03$ & 90.98  & $574.55 \pm 73.33$  & 69.38 & $2232.85 \pm 364.56$ & 79.04  & $3759.75 \pm 808.26$ & 63.07 & 63.57 \\
Grams  & $7715.00 \pm 627.92$   & 89.45 & $-1.35 \pm 0.75$ & 96.95  & $690.40 \pm 76.27$  & 83.37 & $1989.15 \pm 201.64$ & 70.42  & $5049.25 \pm 624.43$ & 84.71 & 84.98 \\
\textbf{Ano (Ours)}    & $8625.00 \pm 1870.44$  & 100   & $-0.91 \pm 0.21$ & 99.55  & $828.10 \pm 67.66$  & 100   & $2824.85 \pm 226.30$ & 100    & $5960.88 \pm 912.36$ & 100   & 99.91 \\
\textbf{Anolog (Ours)} & $1470.00 \pm 1176.65$  & 17.04 & $-1.06 \pm 1.06$ & 98.67  & $543.60 \pm 111.65$ & 65.64 & $832.50 \pm 125.46$ & 29.47  & $1323.50 \pm 1694.70$ & 22.20 & 46.61 \\
\addlinespace
\textit{Best Version}\\
Adam   & $7615.00 \pm 1299.89$  & 88.29 & $-1.08 \pm 0.21$ & 97.62  & $665.35 \pm 64.89$  & 80.35 & $3443.30 \pm 240.68$ & 100    & $4257.80 \pm 135.90$ & 71.43 & 87.54 \\
RMSprop & $7686.67 \pm 859.43$   & 89.12 & $-0.67 \pm 0.22$ & 100    & $798.00 \pm 118.66$ & 96.37 & $3031.13 \pm 410.22$ & 88.03  & $4585.67 \pm 538.44$ & 76.93 & 90.09 \\
Adan   & $4840.00 \pm 2601.35$  & 56.12 & $-0.95 \pm 0.23$ & 98.35  & $754.20 \pm 51.44$  & 91.08 & $2647.20 \pm 534.22$ & 76.88  & $4524.75 \pm 448.74$ & 75.91 & 79.67 \\
Lion   & $1392.00 \pm 139.66$   & 16.14 & $-1.46 \pm 0.65$ & 95.42  & $508.15 \pm 91.28$  & 61.36 & $2432.35 \pm 234.61$ & 70.64  & $3768.00 \pm 520.95$ & 63.21 & 61.36 \\
Grams  & $7715.00 \pm 627.92$   & 89.45 & $-1.35 \pm 0.75$ & 96.03  & $690.40 \pm 76.27$  & 83.37 & $1989.15 \pm 201.64$ & 57.77  & $5049.25 \pm 624.43$ & 84.71 & 82.26 \\
\textbf{Ano (Ours)}    & $8625.00 \pm 1870.44$  & 100   & $-0.91 \pm 0.21$ & 98.60  & $828.10 \pm 67.66$  & 100   & $2824.85 \pm 226.30$ & 82.04  & $5960.88 \pm 912.36$ & 100   & 96.13 \\
\textbf{Anolog (Ours)} & $7485.00 \pm 1010.66$  & 86.78 & $-0.98 \pm 0.14$ & 98.19  & $751.05 \pm 74.60$  & 90.70 & $2983.00 \pm 236.42$ & 86.63  & $4773.75 \pm 602.39$ & 80.08 & 88.48 \\
\bottomrule
\end{tabular}
}
\caption{Comparison of the average performance ($\pm$ IC95\%) and normalized scores of different optimizers across Atari environments.}
\label{tab:atari-norm}
\end{table}

\section{Hyperparameters Settings}

\subsection{Optimizers Hyperparameters}
\label{app:all_hyperparameters_settings}

\begin{center}
\begin{longtable}{@{}lcccccc@{}}
\caption{Optimizers hyperparameter settings used in all experiments.}
\label{tab:optimizers-hyperparameters}\\
\toprule
\textbf{Model} & \textbf{Optimizer} & $\beta_1$ & $\beta_2$ & $\beta_3$ & $lr$ & $\lambda$ \\
\midrule
\endfirsthead

\multicolumn{7}{c}{{\bfseries Table \thetable\ (continued)}}\\
\toprule
\textbf{Model} & \textbf{Optimizer} & $\beta_1$ & $\beta_2$ & $\beta_3$ & $lr$ & $\lambda$ \\
\midrule
\endhead

\midrule
\multicolumn{7}{r}{}\\
\endfoot

\bottomrule
\endlastfoot

\multicolumn{7}{c}{\textit{Noise Robustness Analysis}}\\
\midrule
CNN & AdamW & 0.9   & 0.999 & -- & 1e-3  & -- \\
    & Lion  & 0.9   & 0.99  & -- & 1e-4  & -- \\
    & Ano   & 0.92  & 0.99  & -- & 1e-4  & -- \\
    & Grams & 0.9   & 0.999 & -- & 1e-3  & -- \\

\midrule
\multicolumn{7}{c}{\textit{Computer Vision (CIFAR-100)}}\\
\midrule
\multicolumn{7}{l}{\textit{Baseline}}\\
ResNet-34 & AdamW & 0.9  & 0.999 & --   & 1e-3 & 1e-2 \\
          & Adan  & 0.98 & 0.92  & 0.99 & 1e-3 & 1e-2 \\
          & Ano   & 0.92 & 0.99  & --   & 1e-3 & 1e-2 \\
          & Lion  & 0.9  & 0.99  & --   & 1e-3 & 1e-2 \\
          & Grams & 0.9  & 0.999 & --   & 1e-3 & 1e-2 \\
\multicolumn{7}{l}{\textit{Tuned}}\\
ResNet-34 & AdamW & 0.9  & 0.99  & --   & 1e-3 & 1e-2 \\
          & Adan  & 0.95 & 0.92  & 0.9 & 1e-3 & 1e-2 \\
          & Ano   & 0.95 & 0.95 & --   & 1e-4 & 1e-2 \\
          & Lion  & 0.92  & 0.99  & --   & 1e-4 & 1e-2 \\
          & Grams & 0.9  & 0.99 & --   & 1e-3 & 1e-2 \\

\midrule
\multicolumn{7}{c}{\textit{Natural Language Processing (GLUE)}}\\
\midrule
\multicolumn{7}{l}{\textit{Baseline}}\\
BERT (base) & AdamW & 0.9  & 0.999 & --   & 2e-5 & 1e-2 \\
            & Adan  & 0.98 & 0.92  & 0.99 & 2e-5 & 1e-2 \\
            & Ano   & 0.92 & 0.99  & --   & 2e-5 & 1e-2 \\
            & Lion  & 0.9  & 0.99  & --   & 2e-5 & 1e-2 \\
            & Grams & 0.9  & 0.999 & --   & 2e-5 & 1e-2 \\
            & Anolog& --   & 0.999 & --   & 2e-5 & 1e-2 \\
\multicolumn{7}{l}{\textit{Tuned}}\\
BERT (base) & AdamW & 0.95  & 0.99  & --   & 7e-5 & 1e-2 \\
            & Adan  & 0.95 & 0.92  & 0.96 & 7e-5 & 1e-2 \\
            & Ano   & 0.9 & 0.95 & --   & 2e-5 & 1e-2 \\
            & Lion  & 0.9  & 0.9  & --   & 7e-6 & 1e-2 \\
            & Grams & 0.92  & 0.999 & --   & 7e-5 & 1e-2 \\
            & Anolog& --   & 0.95 & --   & 2e-5 & 1e-2 \\

\midrule
\multicolumn{7}{c}{\textit{Deep Reinforcement Learning (MuJoCo \& Atari)}}\\
\midrule
\multicolumn{7}{l}{\textit{Baseline}}\\
SAC/PPO & Adam  & 0.9  & 0.999 & --   & 3e-4 & -- \\
    & Adan  & 0.98 & 0.92  & 0.99 & 3e-4 & -- \\
    & Ano   & 0.92 & 0.99  & --   & 3e-4 & -- \\
    & Lion  & 0.9  & 0.99  & --   & 3e-4 & -- \\
    & Grams & 0.9  & 0.999 & --   & 3e-4 & -- \\
    & RMSprop & -- & 0.99 & -- & 3e-4 & -- \\
    & Anolog& --   & 0.999 & --   & 3e-4 & -- \\
\multicolumn{7}{l}{\textit{Tuned}}\\
SAC/PPO & Adam  & 0.9 & 0.99 & --   & 3e-3 & -- \\
    & Adan  & 0.97 & 0.92  & 0.96 & 3e-3 & -- \\
    & Ano   & 0.95 & 0.99  & --   & 3e-4 & -- \\
    & Lion  & 0.92 & 0.99  & --   & 3e-4 & -- \\
    & Grams & 0.9  & 0.999  & --   & 3e-3 & -- \\
    & RMSprop & -- & 0.999 & -- & 3e-3 & -- \\
    & Anolog& --   & 0.99 & --   & 3e-3 & -- \\
\multicolumn{7}{l}{\textit{Best Version (SAC)}}\\
SAC & Adam  & 0.9 & 0.999 & --   & 3e-4 & -- \\
    & Adan  & 0.97 & 0.92  & 0.96 & 3e-3 & -- \\
    & Ano   & 0.92 & 0.99  & --   & 3e-4 & -- \\
    & Lion  & 0.92 & 0.99  & --   & 3e-4 & -- \\
    & Grams & 0.9  & 0.999  & --   & 3e-3 & -- \\
    & RMSprop & -- & 0.99 & -- & 3e-4 & -- \\
    & Anolog& --   & 0.999 & --   & 3e-4 & -- \\
\multicolumn{7}{l}{\textit{Best Version (PPO)}}\\
PPO & Adam  & 0.9 & 0.999 & --   & 3e-4 & -- \\
    & Adan  & 0.97 & 0.92  & 0.96 & 3e-3 & -- \\
    & Ano   & 0.95 & 0.99  & --   & 3e-4 & -- \\
    & Lion  & 0.92 & 0.99  & --   & 3e-4 & -- \\
    & Grams & 0.9  & 0.999  & --   & 3e-3 & -- \\
    & RMSprop & -- & 0.99 & -- & 3e-4 & -- \\
    & Anolog& --   & 0.999 & --   & 3e-4 & -- \\
\end{longtable}
\end{center}

\subsection{SAC Settings}

\begin{table}[H]
\centering
\begin{tabular}{l l}
\toprule
\textbf{Hyperparameter} & \textbf{Value} \\
\midrule
Total training steps & $1{,}000{,}000$ \\
Discount $\gamma$ & $0.99$ \\
Soft update rate $\tau$ & $0.005$ \\
Replay buffer size & $10^6$ \\
Batch size & $256$ \\
Learning starts & $5{,}000$ steps \\
Actor LR / Critic LR & see Tab \ref{tab:optimizers-hyperparameters} \\
Policy update frequency & $2$ \\
Target network update freq. & $1$ \\
Entropy coeff. $\alpha$ (init) & $0.2$ \\
Entropy autotune & \checkmark \\
Max grad. norm (actor) & $0.5$ \\
Logging interval & $2048$ steps \\
\bottomrule
\end{tabular}
\caption{SAC hyperparameters used in our MuJoCo experiments (values taken from the code).}
\label{tab:sac_hparams_main}
\end{table}

\subsection{PPO Settings}

\begin{table}[H]
\centering
\begin{tabular}{l l}
\toprule
\textbf{Hyperparameter} & \textbf{Value} \\
\midrule
Total timesteps & $10{,}000{,}000$ \\
Number of envs ($N_\mathrm{env}$) & $64$ \\
Steps per rollout ($N_\mathrm{steps}$) & $64$ \\
Batch size ($N_\mathrm{env}\!\times\!N_\mathrm{steps}$) & $4096$ \\
\;Minibatches & $4$ \\
\;Minibatch size & $1024$ \\
Update epochs & $4$ \\
Discount $\gamma$ & $0.99$ \\
GAE $\lambda$ & $0.95$ \\
Learning rate & see Tab \ref{tab:optimizers-hyperparameters}\\
LR annealing & linear (enabled) \\
Advantage normalization & \checkmark \\
Policy clip coef. & $0.10$ \\
Value clip & \checkmark \\
Entropy coef. & $0.01$ \\
Value loss coef. & $0.5$ \\
Max grad-norm & $0.5$ \\
Target KL & none \\
\bottomrule
\end{tabular}
\caption{PPO hyperparameters used in our Atari experiments (defaults from code).}
\label{tab:ppo_hparams_main}
\end{table}

\end{document}